\newcommand{\feats}{\mathcal{X}}
\newcommand{\labels}{\mathcal{Y}}
\newcommand{\R}{\mathbb{R}}
\newcommand{\dtrain}{\mathcal{D}_{\textit{train}}}
\newcommand{\dtest}{\mathcal{D}_{\textit{test}}}
\newcommand{\drand}{\mathcal{D}_{\textit{rand}}}
\newcommand{\dany}{\mathcal{D}}
\newcommand{\writtenbyLC}[1]{#1}
\newcommand{\flip}[1]{\delta(#1,S)}
\newcommand{\ccd}{\tilde{d}}
\newcommand{\interp}[1]{\llbracket #1 \rrbracket}
\newcommand{\empirical}[1]{#1}
\theoremstyle{definition}
\newtheorem{definition}{Definition}
\newtheorem{theorem}{Theorem}
\title{Explainable Global Fairness Verification of Tree-Based Classifiers}
\author{\IEEEauthorblockN{Stefano Calzavara\IEEEauthorrefmark{1}\textsuperscript{1},
Lorenzo Cazzaro\IEEEauthorrefmark{1}\textsuperscript{1}, Claudio Lucchese\IEEEauthorrefmark{1}\textsuperscript{1} and
Federico Marcuzzi\IEEEauthorrefmark{1}\textsuperscript{1}}
\IEEEauthorblockA{\IEEEauthorrefmark{1}Department of Environmental Sciences, Informatics and Statistics,\\
Ca’ Foscari University of Venice, Italy.\\
Email: \{name.surname\}@unive.it}
}
\begin{document}

\maketitle

\begingroup\renewcommand\thefootnote{1}
\footnotetext{Equal contribution}
\endgroup

\begin{abstract}
We present a new approach to the global fairness verification of tree-based classifiers. Given a tree-based classifier and a set of sensitive features potentially leading to discrimination, our analysis synthesizes sufficient conditions for fairness, expressed as a set of traditional propositional logic formulas, which are readily understandable by human experts. The verified fairness guarantees are global, in that the formulas predicate over all the possible inputs of the classifier, rather than just a few specific test instances. Our analysis is formally proved both sound and complete. Experimental results on public datasets show that the analysis is precise, explainable to human experts and efficient enough for practical adoption.
\end{abstract}

\section{Introduction}
The ever-increasing success of Machine Learning (ML) led to its massive deployment in a range of different settings over the last years. Unfortunately, classic approaches to assess the performance of ML models do not always provide a reliable picture of their effectiveness in so many varied practical deployments. For example, traditional deep learning models with state-of-the-art accuracy may be vulnerable to evasion attacks, i.e., small perturbations of test inputs designed to force prediction errors, thus making their deployment in adversarial settings unfeasible~\cite{SzegedyZSBEGF13,MadryMSTV18}. Similarly, ML models may turn out to be \emph{unfair} for automated decision-making. For example, a commercial recidivism-risk assessment algorithm was found to be racially biased~\cite{LarsonMKA16} and an existing algorithm adopted in the US falsely determined that black patients were healthier than other patients with similar conditions~\cite{ObermeyerM19}. These incidents led to a proliferation of research on fair ML in recent times, as summarized in different surveys~\cite{abs-2010-04053,abs-2012-15816,MehrabiMSLG21}.

Fairness in ML has been analyzed from different angles and can be broadly categorized in two main research lines. The first one includes the development of new ML algorithms that are able to mitigate the bias that is directly
or indirectly present in the training data~\cite{AghaeiAV19,RanzatoUZ21,RuossBFV20}. The second, complementary research line investigates techniques to estimate or even formally verify the fairness guarantees provided by existing ML models~\cite{JohnVS20,UrbanCWZ20,IgnatievCSHM20}. This paper contributes to the latter line of work, which is still at an early stage of development and suffers from relevant shortcomings.

A very popular approach to assess the fairness guarantees of ML models is based on \emph{testing}~\cite{GalhotraBM17,AggarwalLNDS19, UdeshiAC18, BlackYF20}. The key common intuition underlying any fairness testing strategy is straightforward: generating a number of test inputs designed to automatically identify individuals who may suffer from discrimination by the ML model. Unfortunately, as for any testing approach, this type of analysis is under-approximated: these proposals can identify room for unfair treatment, but cannot establish formal fairness proofs. This is sub-optimal because it does not allow one to prove that unfair behavior can never affect specific classes of individuals. For this reason, recent papers advocated the adoption of formal \emph{fairness verification} techniques to prove lack of discrimination~\cite{RuossBFV20,JohnVS20,UrbanCWZ20,IgnatievCSHM20,abs-2205-09927}. Most of the work in the area, however, just focuses on deep neural networks and disregards tree-based ML models, such as decision trees~\cite{BreimanFOS84} and random forests~\cite{Breiman01}, which are still exceptionally popular, in particular for non-perceptual classification tasks. The only notable fairness verification approach designed for tree-based classifiers leverages abstract interpretation to verify \emph{local fairness} properties~\cite{RanzatoUZ21}. Unfortunately, local fairness is now recognized as a rather weak property predicating just on specific test instances, while \emph{global fairness} predicates over all the possible inputs of the classifier and is thus more reliable to assess the actual fairness guarantees that it provides~\cite{abs-2205-09927}.

\subsection{Contributions}
We present a new approach to the global fairness verification of tree-based classifiers. Given a tree-based classifier and a set of sensitive features potentially leading to discrimination, our analysis synthesizes sufficient conditions for fairness, expressed as a set of traditional propositional logic formulas $F$ predicating over the entire feature space, rather than just on a specific test set, thus providing global guarantees.

Our fairness verification approach is formally proved both \emph{sound} and \emph{complete}, i.e., fairness is certified for any instance satisfying some formula in $F$, and the formulas in $F$ can characterize all the instances where the classifier is fair. Moreover, our approach is \emph{explainable}, i.e., it is readily understandable by human experts, being based on traditional logic formulas. In particular, we empirically show that a small set of simple logic formulas suffices to largely characterize the fairness guarantees provided by the classifier in practice. This makes our approach particularly appealing for problems like algorithmic hiring, where automated decisions must be carefully audited~\cite{SchumannFMD20}.

\subsection{Structure of the Paper}
The paper is organized as follows:
\begin{itemize}
    \item Section~\ref{sec:background} reviews the background and introduces the necessary ingredients to appreciate the technical contribution;
    \item Section~\ref{sec:analysis} presents our fairness verification approach and establishes the formal guarantees provided by our analysis. Moreover, it describes the implementation of the analysis in C++. We plan to release the developed software upon paper acceptance;
    \item Section~\ref{sec:experiments} reports on our experimental evaluation on public datasets. Our experiments assess the precision of the analysis, its explainability, and its performance;
    \item Section~\ref{sec:related} presents and compares with related work, thus clarifying the distinctive features of our proposal;
    \item Section~\ref{sec:conclusion} briefly concludes the paper and describes possible future work in the area.
\end{itemize}
\section{Background}
\label{sec:background}

We here introduce the key technical ingredients required to appreciate the contribution of the paper.

\subsection{Tree-Based Classifiers}
Let $\feats \subseteq \R^d$ be a $d$-dimensional vector space of features and let $\labels$ be a finite set of class labels. We assume each element of the feature space $\vec{x} = \langle x_1,\ldots,x_d \rangle$, called \emph{instance}, to be assigned a correct class $y$ by an unknown \emph{target} function $g: \feats \rightarrow \labels$. A \emph{classifier} is a function $f: \feats \rightarrow \labels$ intended to approximate the target function $g$ as accurately as possible. Normally, $f$ is automatically trained by a supervised learning algorithm, using a \emph{training set} $\dtrain = \{(\vec{x}_i,g(\vec{x}_i))\}_i$ of correctly labeled instances. The performance of $f$ is then assessed on a separate \emph{test set} $\dtest = \{(\vec{z}_i,g(\vec{z}_i))\}_i$, sampled from the same distribution of the training set.

In this paper, we focus on \emph{decision tree} classifiers~\cite{BreimanFOS84}. Decision trees can be inductively defined as follows: a decision tree $t$ is either a leaf $\lambda(y)$ for some label $y \in \labels$ or an internal node $\sigma(f,v,t_l,t_r)$, where $f \in \{1,\ldots,d\}$ identifies a feature, $v \in \R$ is a threshold for the feature, and $t_l,t_r$ are decision trees (left and right respectively). At test time, the instance $\vec{x}$ traverses the tree $t$ until it reaches a leaf $\lambda(y)$, which returns the prediction $y$, denoted by $t(\vec{x}) = y$. Specifically, for each traversed tree node $\sigma(f,v,t_l,t_r)$, $\vec{x}$ falls into the left sub-tree $t_l$ if $x_f \leq v$, and into the right sub-tree $t_r$ otherwise. Figure~\ref{fig:tree} represents an example decision tree of depth two, which assigns the label $+1$ to the instance $\langle 10,6 \rangle$ and the label $-1$ to the instance $\langle 6,9 \rangle$. Decision trees are normally combined into an \emph{ensemble} $T = \{t_1,\ldots,t_n\}$ to improve their predictive power~\cite{Breiman01}: in this case, the ensemble prediction $T(\vec{x})$ is computed by combining together the individual tree predictions $t_i(\vec{x})$, e.g., by performing majority voting on the individually predicted classes.

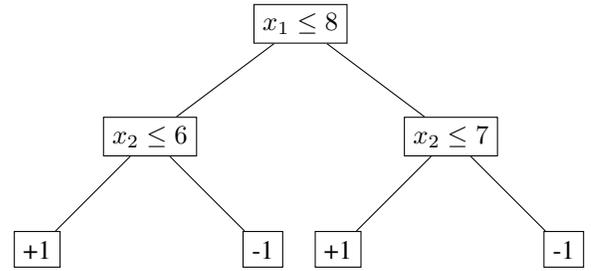
\begin{figure}[t]
\centering
\begin{tikzpicture}[level 1/.style={sibling distance=4cm},level 2/.style={sibling distance=3cm}]
\tikzstyle{every node}=[rectangle,draw]
\node{$x_1 \leq 8$}
	child { node {$x_2 \leq 6$}
	        child { node {+1}}
	        child { node {-1}} }
	child { node {$x_2 \leq 7$}
		    child { node {+1}}
	        child { node {-1}} }
;
\end{tikzpicture}
\caption{Example of decision tree}
\label{fig:tree}
\end{figure}

\subsection{Fairness in ML}
Many definitions of fairness have been proposed in the literature, each with its pros and cons~\cite{VermaR18}. In this paper, we consider (lack of) \emph{causal discrimination}~\cite{GalhotraBM17} as our fairness definition. We focus on this property for several reasons: its intuitive flavour, its popularity in the literature, and its independence from the distribution of the class labels (target function), which simplifies its practical application. Moreover, lack of causal discrimination is a powerful foundation for fairness, because it does not rely on the choice of a specific test set, but rather predicates over all the possible instances in (a subset of) the feature space $\feats$, much in line with recent proposals on the verification of \emph{global robustness} properties of machine learning models~\cite{Chen0QLJW21,LeinoWF21,CalzavaraCLMO21}. This is important in the fairness setting, because fairness is particularly relevant for minorities for which it might be hard to collect representative data in the test set. Indeed, the need for global fairness verification has been recently advocated for neural networks~\cite{abs-2205-09927}.

Intuitively, lack of causal discrimination requires that the classifier returns the same prediction on any two instances differing just for the value of a set of \emph{sensitive} features $S \subseteq \{1,\ldots,d\}$. Given an instance $\vec{x}$, we let $\flip{\vec{x}}$ stand for the set of the instances differing from $\vec{x}$ just for a (possibly empty) subset of the sensitive features $S$. For example, if $S$ included just two binary features, then $\flip{\vec{x}}$ would include the four instances obtained from $\vec{x}$ by setting each of the sensitive features in $S$ to one of their two possible values, while keeping the other features unchanged. Formally, lack of causal discrimination is then defined as follows.

\begin{definition}[Causal Discrimination]
\label{def:discrimination}
Let $f: \feats \rightarrow \labels$ be a classifier and let $S$ be a set of sensitive features. We say that $f$ does not perform \emph{causal discrimination} on $\feats' \subseteq \feats$ if and only if, for every instance $\vec{x} \in \feats'$, we have that $\forall \vec{z} \in \flip{\vec{x}}: f(\vec{z}) = f(\vec{x})$.
\end{definition}

To exemplify the definition at work, suppose a classifier is used to evaluate the legitimacy of loan requests and that the set of sensitive features $S$ just includes the customer's gender. Lack of causal discrimination on $\feats$ requires that any two identical customers differing just for their gender are guaranteed to get the same response to their loan requests. Focusing on a specific $\feats' \subseteq \feats$ allows one to make the fairness guarantees \emph{conditional} and thus more practically useful, e.g., by requiring that any two identical customers with a monthly salary higher than \$4,000 are guaranteed to receive the same response to their loan requests, irrespective of their gender.
\section{Global Fairness Verification}
\label{sec:analysis}

We here present our new approach to the global fairness verification of tree-based classifiers. 

\subsection{Overview}
Verifying the lack of causal discrimination is challenging, because it involves a universal quantification over a set of instances, possibly drawn from a continuous, unbounded feature space. Prior work on causal discrimination circumvented this problem by restricting its focus to a finite feature space and assessing fairness by means of a testing approach~\cite{GalhotraBM17}. In particular, this restriction enables the computation of a \emph{causal discrimination score}, defined as the fraction of instances in the feature space suffering from causal discrimination, a measure which is only meaningful as long as the feature space is finite and instances therein can be exhaustively enumerated. While one can play around this limitation by using binning to discretize the feature space, the testing approach in~\cite{GalhotraBM17} is not exhaustive for scalability reasons, hence it can only identify counter-examples suffering from causal discrimination, but it cannot prove a lack of causal discrimination. Similar criticisms apply to other more recent proposals on fairness testing~\cite{AggarwalLNDS19, UdeshiAC18, BlackYF20}.

In the present work, we improve over the state of the art by proposing a new verification technique to formally verify the fairness guarantees supported by tree-based models. In particular, our technique allows one to automatically identify subsets of the feature space where lack of causal discrimination is ensured, rather than just counter-examples suffering from causal discrimination. Concretely, we first discuss how one can verify lack of causal discrimination for tree ensembles given a continuous, unbounded subset of the feature space $\feats' \subseteq \feats$ as input (Section~\ref{sec:verification}). We then build on this idea to design an effective algorithm to automatically synthesize sufficient conditions ensuring lack of causal discrimination, expressed as traditional propositional logic formulas, which are readily understandable by human experts (Section~\ref{sec:synthesis}). Our algorithm is iterative and its execution can be safely stopped before convergence to improve performance, without sacrificing soundness, i.e., only sufficient conditions are returned by the algorithm. Moreover, the algorithm is proved complete upon termination, i.e., the logical formulas returned by the algorithm may eventually provide a complete characterization of the fairness guarantees given by the classifier.

\subsection{Verification Algorithm}
\label{sec:verification}
The first problem we investigate can be formulated as follows: given a decision tree ensemble $T$, a set of sensitive features $S$ and a subset of the feature space $\feats' \subseteq \feats$, verify that $T$ does not perform causal discrimination on $\feats'$.

To answer this question, we leverage the \emph{stability} notion from the adversarial machine learning literature~\cite{RanzatoZ20}. The idea underlying stability is that classifiers deployed in adversarial settings should not be fooled by adversarial manipulations of test instances, designed to force the classifier to return wrong predictions (so-called evasion attacks). This can be enforced by requiring the classifier to be ``stable'' with respect to adversarial manipulations, i.e., to stick to its original predictions despite such manipulations. For example, malware detectors must not change their predictions as the result of semantically-preserving adversarial manipulations of malicious software, e.g., if malware replaces a given API call with an equivalent set of instructions, it should still be classified as malware. Formally, stability is defined as follows.

\begin{definition}[Stability]
Given a classifier $f: \feats \rightarrow \labels$, an instance $\vec{x} \in \feats$ and a set of adversarial manipulations $A(\vec{x})$, $f$ is \emph{stable} on $\vec{x}$ if and only if $\forall \vec{z} \in A(\vec{x}): f(\vec{z}) = f(\vec{x})$.  
\end{definition}

The idea of stability generalizes from the adversarial setting to the fairness setting, because lack of causal discrimination requires that arbitrary changes to the sensitive features $S$ must not affect the classifier predictions. A similar observation for a different definition of fairness has been performed in recent independent work~\cite{RanzatoUZ21}. The important point to note, though, is that while stability predicates on a specific instance $\vec{x} \in \feats$, lack of causal discrimination predicates over a potentially unbounded set of instances $\feats' \subseteq \feats$, hence traditional approaches to stability/fairness verification such as~\cite{RanzatoZ20} cannot be directly applied to verify lack of causal discrimination on $\feats'$.

We thus propose to leverage recent solutions for the \emph{data-independent} verification of decision tree ensembles~\cite{CalzavaraCLMO21,TornblomN20} to verify lack of causal discrimination in a continuous, unbounded subset of the feature space $\feats'$. These verification approaches are data-independent because they do not analyze the behavior of the ensemble on specific test instances, but they rather analyze the structure of the ensemble to allow the verification of properties predicating over all the instances in (a subset of) the feature space. Concretely, data-independent analyses operate in terms of a set of \emph{hyper-rectangles} $H \subseteq \feats$ such that all instances $\vec{x} \in H$ satisfy a property of interest, e.g., lead to the same prediction or enjoy stability. In particular, we build on the following definition of data-independent stability analysis for our verification purposes.

\begin{definition}[Data-Independent Stability Analysis]
\label{def:di-stability}
Let $f: \feats \rightarrow \labels$ be a classifier and let $S$ be a set of sensitive features. A \emph{data-independent stability analysis} is an algorithm which takes as input $f$ and $S$ to return as output a set of hyper-rectangles $U$ satisfying the following property: for every instance $\vec{x} \in \feats$, if there exists $\vec{z} \in \flip{\vec{x}}$ such that $f(\vec{z}) \neq f(\vec{x})$, then there exists $H \in U$ such that $\vec{x} \in H$.
\end{definition}

Intuitively, the output of a data-independent stability analysis over-approximates the subset of the feature space where the classifier violates stability for some instances therein, i.e., if $\vec{x}$ suffers from causal discrimination, then the result of the analysis must include a hyper-rectangle $H$ such that $\vec{x} \in H$. In the fairness setting, this means that the union of the hyper-rectangles returned by the analysis over-approximates the set of the counter-examples suffering from causal discrimination. Note that we only require an over-approximation rather than an exact characterization, because this is a minimal assumption for sound fairness verification and data-independent analysis might be approximated by design for scalability reasons~\cite{CalzavaraCLMO21}.

If a data-independent stability analysis is available, solving our problem of interest is straightforward. In particular, given a decision tree ensemble $T$ and a set of sensitive features $S$, we can run the data-independent stability analysis on $T$ and $S$ to produce the set of hyper-rectangles $U$. To prove that $T$ does not perform causal discrimination on $\feats' \subseteq \feats$, it suffices to show that there exists no $H \in U$ such that $H \cap \feats' \neq \emptyset$. 

We now discuss how the verification approaches in~\cite{CalzavaraCLMO21,TornblomN20} can be used to implement a data-independent stability analysis as required by our definition, thus allowing us to take advantage of existing state-of-the-art verification approaches.

\subsubsection{Resilience Verifier~\cite{CalzavaraCLMO21}}
Recent work on adversarial machine learning introduced the \emph{resilience} notion to characterize the security of classifiers deployed in adversarial settings. At the core of its resilience verification algorithm, there is a data-independent stability analysis for decision tree ensembles, which can be readily leveraged for our purposes by modeling an attacker such that, for all instances $\vec{x} \in \feats$, $A(\vec{x}) = \flip{\vec{x}}$. Intuitively, this attacker has full control of the value of the sensitive features $S$, so stability against such an attacker ensures that causal discrimination is not possible. The output of the analysis is a set of hyper-rectangles over-approximating the subset of the feature space where the decision tree ensemble is unstable, thus satisfying the conditions of Definition~\ref{def:di-stability}.

\subsubsection{VoTE Checker~\cite{TornblomN20}}
The VoTE Checker takes as input a decision tree ensemble and produces as output a partitioning of the feature space $\feats$ in terms of the equivalence classes induced by the ensemble. Each equivalence class is represented as a pair $(H,y)$, meaning that all the instances $\vec{x} \in H$ traverse the same path combination in the ensemble leading to prediction $y$. If there exist two equivalence classes $(H_1,y_1),(H_2,y_2)$ such that: $(i)$ $y_1 \neq y_2$, $(ii)$ $H_1,H_2$ have a non-empty intersection for all the features $j \not\in S$, and $(iii)$ $H_1,H_2$ have different intervals for some feature $k \in S$, then $H_1 \cup H_2$ includes a counter-example suffering from causal discrimination, because it contains two instances differing just for some sensitive feature $k$ where the classifier returns two different predictions. It is thus possible to add both $H_1$ and $H_2$ to the set of hyper-rectangles to be returned by the analysis, thus satisfying the conditions of Definition~\ref{def:di-stability}.

\subsection{Synthesis Algorithm}
\label{sec:synthesis}
We now consider a different, more interesting problem: given a decision tree ensemble $T$ and a set of sensitive features $S$, characterize the subset of the feature space $\feats' \subseteq \feats$ such that $T$ does not perform causal discrimination on $\feats'$. 

Intuitively, this problem can be conservatively solved by subtracting from $\feats$ all the hyper-rectangles $H \in U$ returned by the data-independent stability analysis assumed in the previous section, thus under-approximating the subset of the feature space where $T$ is stable. This approach would be sound, but also computationally inefficient, because subtracting hyper-rectangles suffers from an exponential blowup with respect to the dimensionality of the feature space. Indeed, given two hyper-rectangles with $d$ features, their subtraction might generate $O(d)$ hyper-rectangles in the general case, leading to $O(d^{|U|})$ hyper-rectangles in the worst case at the end of the subtraction process. This limitation can be circumvented by avoiding the computation of the subtraction and by directly reasoning in terms of instances falling out from the hyper-rectangles $U$. However, this would make it hard to characterize $\feats'$ in a human-understandable way, due to both the sheer number of the hyper-rectangles and the potentially large dimensionality of the feature space. 

We thus propose an iterative algorithm designed to incrementally generate increasingly complex sufficient conditions ensuring lack of causal discrimination, expressed as traditional logic formulas. This way, the first iterations of the algorithm can efficiently generate conditions which are short and easy to understand for human experts, while being arguably the most useful to analysts; the more analysis time and computational resources are available, the more complex conditions can be identified, thus detecting other subsets of the feature space where lack of causal discrimination is ensured. In other words, each iteration of the algorithm extends the sound approximation identified by the previous iteration by accounting for more complicated sufficient conditions for fairness, until the whole relevant subset of the feature space is covered by the conditions (or an early stopping criterion is met).

\subsubsection{Overview}
Our algorithm is inspired by the classic Apriori algorithm for frequent itemset mining~\cite{AgrawalS94}. We define an \emph{item} $i$ as a formula of the form $x_f \leq v$ or $x_f > v$, where $f \in \{1,\ldots,d\}$ identifies a feature and $v \in \R$. An \emph{itemset} $I$ is a set of formulas $\{i_1,\ldots,i_n\}$, interpreted in conjunctive form. For example, the itemset $I = \{x_1 > 1, x_1 \leq 3, x_2 > 5\}$ identifies all the instances such that the first feature is in the interval $(1,3]$ and the second feature is in the interval $(5,+\infty)$. Formally, we write $\interp{i}$ for the set of the instances identified by item $i$ and we let $\interp{I} = \bigcap_{i \in I} \interp{i}$ stand for the set of the instances identified by the itemset $I$.

An itemset identifies a subset of the feature space which is assessed for lack of causal discrimination, thus allowing us to leverage the following \emph{monotonicity} property to prune the search space: if $I$ enjoys lack of causal discrimination, then any other $I' \supseteq I$ does that as well, because $\interp{I'} \subseteq \interp{I}$; hence, $I'$ itself does not need to be analyzed. This property allows one to assess itemsets for lack of causal discrimination by starting from those with smaller cardinality.

To exemplify how the algorithm works at a high level, consider a two-dimensional feature space and assume $U$ contains just two hyper-rectangles $H_1 = \langle (1,5], (3,8] \rangle$ and $H_2 = \langle (4,7], (2,6] \rangle$. Our goal is characterizing those instances falling \emph{outside} both $H_1$ and $H_2$. Instances laying outside $H_1$ can be described as the union of the following four hyper-rectangles:
\begin{itemize}
    \item $H_{11} = \langle (-\infty,1], (-\infty,+\infty) \rangle$, represented as the itemset $I_{11} = \{x_1 \leq 1\}$
    \item $H_{12} = \langle (5,+\infty), (-\infty,+\infty) \rangle$, represented as the itemset $I_{12} = \{x_1 > 5\}$
    \item $H_{13} = \langle (-\infty,+\infty), (-\infty,3] \rangle$, represented as the itemset $I_{13} = \{x_2 \leq 3\}$
    \item $H_{14} = \langle (-\infty,+\infty), (8,+\infty) \rangle$, represented as the itemset $I_{14} = \{x_2 > 8\}$
\end{itemize}

Instances laying outside $H_2$ can instead be described as the union of the following four hyper-rectangles:
\begin{itemize}
    \item $H_{21} = \langle (-\infty,4], (-\infty,+\infty) \rangle$, represented as the itemset $I_{21} = \{x_1 \leq 4\}$
    \item $H_{22} = \langle (7,+\infty), (-\infty,+\infty) \rangle$, represented as the itemset $I_{22} = \{x_1 > 7\}$
    \item $H_{23} = \langle (-\infty,+\infty), (-\infty,2] \rangle$, represented as the itemset $I_{23} = \{x_2 \leq 2\}$
    \item $H_{24} = \langle (-\infty,+\infty), (6,+\infty) \rangle$, represented as the itemset $I_{24} = \{x_2 > 6\}$
\end{itemize}

To identify instances laying outside both $H_1$ and $H_2$, we first inspect all the itemsets above and we check whether they identify subsets of the feature space intersecting $H_1$ or $H_2$. If we do not find overlaps, the itemsets already represent instances falling out both $H_1$ and $H_2$, hence causal discrimination cannot happen there. In our example, we identify the itemsets $I_{11}$, $I_{14}$, $I_{22}$ and $I_{23}$ as sufficient conditions for lack of causal discrimination; note that the first and the third itemsets only involve the feature $x_1$, while the second and the fourth itemsets only involve the feature $x_2$. The other itemsets $I_{12}$, $I_{13}$, $I_{21}$ and $I_{24}$, instead, identify subsets of the feature space where causal discrimination might potentially happen. These itemsets are thus combined together to generate additional itemsets to check, each possibly using both features $x_1$ and $x_2$, leading to conditions of higher complexity. For example, $I_{12}$ and $I_{24}$ generate the new itemset $\{x_1 > 5, x_2 > 6\}$, which represents again instances falling out both $H_1$ and $H_2$, thus identifying sufficient conditions for lack of causal discrimination. Instead, $I_{12}$ and $I_{13}$ generate the new itemset $\{x_1 > 5, x_2 \leq 3\}$, which overlaps with $H_2$, hence it identifies a subset of the feature space where causal discrimination may happen. This itemset may thus be combined with other itemsets to undergo further refinements over $x_1$ and $x_2$, leading to smaller intervals on them, possibly identifying additional sufficient conditions for proving lack of causal discrimination.

\subsubsection{Algorithm}
Having defined the key intuitions of our proposal, we are now ready to present the details of the synthesis algorithm (Algorithm~\ref{alg:synthetize}). The algorithm takes as input a tree ensemble $T$ and a set of sensitive features $S$ to return as output a set of sufficient conditions on the feature space ensuring lack of causal discrimination. The algorithm starts by invoking the {\sc Analyze} function over $T$ and $S$, which implements a data-independent stability analysis (cf. Definition~\ref{def:di-stability}) returning a set of hyper-rectangles $U$ where $T$ may be unstable (line 2). The algorithm then initializes an empty set of fairness conditions $F$ and generates a set of candidates $C$, initialized with itemsets involving just a single feature, as described in our example; this step is implemented by the {\sc Gen\_Itemsets} function, whose straightforward implementation is omitted (lines 3-4). Each itemset $I \in C$ is checked against $U$: if $\interp{I}$ does not intersect any hyper-rectangle $H \in U$, the itemset identifies a sufficient condition for lack of causal discrimination, hence it is added to the set of fairness conditions $F$ (lines 5-7). The itemsets which do not immediately contribute to extending $F$ are instead used in the main loop of the algorithm (lines 8-19). In particular, all such itemsets are combined with each other through a \emph{meet} operator $\sqcap$ to produce new itemsets to analyze; such itemsets can either be proved fair or undergo additional refinements at later iterations, as long as there are candidates to process. The meet operator is defined and commented below.

\begin{definition}[Meet Operator]
\label{def:meet}
Given two itemsets $I_1,I_2$ such that $|I_1| = |I_2| = k$ and $|I_1 \cap I_2| = k-1$, we define their \emph{meet} $I_1 \sqcap I_2$ as the itemset $I = I_1 \cup I_2$, provided that the following conditions hold:
\begin{enumerate}
    \item $\interp{I} \neq \emptyset$, i.e., the itemset $I$ identifies a non-empty subset of instances;
    \item $\interp{I} \subset \interp{I_1}$ and $\interp{I} \subset \interp{I_2}$, i.e., the itemset $I$ identifies less instances than both $I_1$ and $I_2$.
\end{enumerate}

If any of the aforementioned conditions does not hold, the meet $I$ does not exist.
\end{definition}

Observe that, given two itemsets $I_1,I_2$ of cardinality $k$ sharing $k-1$ elements, their meet $I_1 \sqcap I_2$ produces a new itemset $I_1 \cup I_2$ of cardinality $k+1$, i.e., itemsets are generated in increasing order of cardinality to leverage the discussed monotonicity property. The two technical conditions of Definition~\ref{def:meet} just ensure that testing the new generated itemset might be useful, i.e., the new itemset is non-empty and differs from the previously generated itemsets $I_1,I_2$. Although these conditions are formulated in a declarative style to simplify their understanding, the meet operator $\sqcap$ is straightforward to implement in practice. 
\writtenbyLC{In particular, let $i^*$ be the (only) item such that $i^* \in I_2 \setminus I_1$ and let $f^*$ be the feature predicated upon by $i^*$, then we let $I_1 \sqcap I_2 = I_1 \cup \{i^*\}$ provided that the two conditions of the definition are satisfied. The implementation of the first condition checks whether all the items $i \in I_1$ predicating on $f^*$ have a non-empty intersection with $i^*$. The implementation of the second condition, instead, amounts to verifying that adding $i^*$ to $I_1$ identifies a smaller interval for the feature $f^*$.}

There is just one point left to discuss, which is the key observation that not all the itemsets must undergo the potentially expensive intersection test against $U$. In particular, if $\interp{I} \subseteq \interp{I'}$ for some $I'$ which we already proved fair, $I$ can be ignored, because it does not identify new sufficient conditions for fairness. The check at line 14 implements this optimization. Note that it is easy to move from the declarative style of this check to its implementation\writtenbyLC{, because checking $\interp{I} \subseteq \interp{I'}$ amounts to checking that, for all features $f$, the interval on $f$ identified by $I$ is included in the interval on $f$ identified by $I'$.}

\begin{algorithm}[t]
\caption{Synthesizing Fairness Conditions}
\label{alg:synthetize}
\begin{algorithmic}[1]
\Function{Synthesize}{$T,S$}
    \State $U \gets \Call{Analyze}{T,S}$
    \State $F \gets \emptyset$
    \State $C \gets \Call{Gen\_Itemsets}{U}$
    \For{$I \in C$}
        \If{$\forall H \in U: \interp{I} \cap H = \emptyset$}
            \State{$F \gets F \cup \{I\}$}
        \EndIf
    \EndFor
    \State $C \gets C \setminus F$
    \While{$C \neq \emptyset$}
        \State{$C' \gets \emptyset$}
        \For{$I_1 \in C$}
            \For{$I_2 \in C$}
                \State{$I \gets I_1 \sqcap I_2$}
                \If{$I \text{ exists } \wedge \not \exists I' \in F: \interp{I} \subseteq \interp{I'}$}
                    \If{$\forall H \in U: \interp{I} \cap H = \emptyset$}
                        \State{$F \gets F \cup \{I\}$}
                    \Else
                        \State{$C' \gets C' \cup \{I\}$}
                    \EndIf
                \EndIf
            \EndFor
        \EndFor
        \State{$C \gets C'$}
    \EndWhile
    \State \Return{$F$}
\EndFunction
\end{algorithmic}
\end{algorithm}

\subsubsection{Formal Results}
We can prove that our algorithm is both sound and complete, as formalized below. Soundness ensures that any itemset $I$ returned by the synthesis algorithm is a sufficient condition for fairness, i.e., instances in $\interp{I}$ cannot suffer from causal discrimination.

\begin{theorem}[Soundness]
For any decision tree ensemble $T$ and set of sensitive features $S$, the call $\Call{Synthesize}{T,S}$ returns a set of itemsets $F$ such that, for every $I \in F$, $T$ does not perform causal discrimination on $\interp{I}$.
\end{theorem}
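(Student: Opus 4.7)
The plan is to prove the theorem by a direct invariant argument on the algorithm, relying on the defining property of the data-independent stability analysis (Definition~\ref{def:di-stability}) invoked at line 2. The key observation is that \textsc{Synthesize} only ever adds an itemset $I$ to $F$ through one of two branches: at line 7 in the initial \textbf{for} loop, or at line 18 inside the main \textbf{while} loop. In both branches, the surrounding \textbf{if} guard requires $\forall H \in U: \interp{I} \cap H = \emptyset$, so this disjointness condition holds for every itemset that ends up in the final $F$. This invariant is what I would isolate first, since it lets the rest of the argument bypass any reasoning about how candidates are generated or refined.

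The core step is then to fix an arbitrary $I \in F$ and show that $T$ does not perform causal discrimination on $\interp{I}$. I would take any $\vec{x} \in \interp{I}$ and observe that, by the invariant above, $\vec{x} \notin H$ for every $H \in U$. Applying the contrapositive of Definition~\ref{def:di-stability}, it follows that there is no $\vec{z} \in \flip{\vec{x}}$ with $T(\vec{z}) \neq T(\vec{x})$, which is exactly lack of causal discrimination at $\vec{x}$. Since $\vec{x} \in \interp{I}$ was arbitrary, Definition~\ref{def:discrimination} yields the required fairness guarantee on $\interp{I}$, and quantifying over $I \in F$ finishes the proof.

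I do not expect any real obstacle here: soundness is a consequence solely of the over-approximation property of $U$ and the conservative addition rule used by \textsc{Synthesize}. Nothing about the meet operator $\sqcap$, the candidate generation by \textsc{Gen\_Itemsets}, or the subsumption check at line 14 needs to be invoked, because these only affect which candidates are explored and in what order, not the correctness of the ones that are ultimately added to $F$. Consequently the proof can be written in just a few lines once Definition~\ref{def:di-stability} is unpacked, with no induction over the iterations of the \textbf{while} loop and no case analysis on how $C$ evolves.
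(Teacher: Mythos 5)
Your proposal is correct and follows essentially the same argument as the paper's own proof: both rest on the observation that every itemset added to $F$ must pass the guard $\forall H \in U: \interp{I} \cap H = \emptyset$ (lines 6--7 and 15--16), combined with the contrapositive of the over-approximation property in Definition~\ref{def:di-stability}. Your write-up is slightly more explicit in unpacking the contrapositive step and in noting that the meet operator and subsumption check are irrelevant to soundness, but the substance is identical.
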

\begin{proof}
The call $\Call{Analyze}{T,S}$ returns a set of hyper-rectangles $U$ satisfying the following property: for every instance $\vec{x} \in \feats$, if there exists $\vec{z} \in \flip{\vec{x}}$ such that $f(\vec{z}) \neq f(\vec{x})$, then there exists $H \in U$ such that $\vec{x} \in H$. This means that $T$ cannot perform causal discrimination on any $\feats' \subseteq \feats$ such that $\forall H \in U: \feats' \cap H = \emptyset$. The conclusion follows by observing that any itemset $I$ which is added to $F$ must satisfy this property (cf. lines 6-7 and lines 15-16).
\end{proof}

Completeness, instead, ensures that the combination of all the itemsets $F$ returned by the synthesis algorithm coincides with the subset of the feature space disjoint from the result of the data-independent stability analysis, i.e., it represents the ideal outcome of the synthesis algorithm. Note that, if the data-independent stability analysis is not over-approximated but exact, this ensures that $F$ covers all the instances where the classifier is fair.

\begin{theorem}[Completeness]
For any decision tree ensemble $T$ and set of sensitive features $S$, the call $\Call{Synthesize}{T,S}$ returns a set of itemsets $F$ such that $\bigcup_{I \in F} \interp{I} = \feats \setminus \bigcup_{H \in U} H$, where $U$ is the output of $\Call{Analyze}{T,S}$.
\end{theorem}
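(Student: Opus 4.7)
The inclusion $\bigcup_{I \in F} \interp{I} \subseteq \feats \setminus \bigcup_{H \in U} H$ is immediate from the disjointness checks on lines 6 and 15: every itemset added to $F$ is disjoint from every $H \in U$. For the reverse inclusion, fix $\vec{x} \in \feats \setminus \bigcup_H H$. For each $H \in U$, since $\vec{x} \notin H$ there is at least one feature on which $\vec{x}$ falls outside $H$'s interval, so $\Call{Gen\_Itemsets}{U}$ emits a single item $i_H$ satisfying $\vec{x} \in \interp{i_H}$ and $\interp{i_H} \cap H = \emptyset$. Collecting one such $i_H$ per hyper-rectangle produces an itemset $I^\star$ with $\vec{x} \in \interp{I^\star}$ and $\interp{I^\star} \cap H = \emptyset$ for every $H$; by discarding items that do not strictly tighten the interval on their feature, I may additionally assume $I^\star$ is \emph{irredundant}.

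The heart of the argument is a strong induction on $k = |I^\star|$ establishing the following invariant: \emph{for every irredundant itemset $I$ of cardinality $k$ with items from $\Call{Gen\_Itemsets}{U}$ and $\interp{I} \neq \emptyset$, by the end of the iteration that generates size-$k$ itemsets (the initial loop if $k=1$, iteration $k-1$ of the while loop otherwise), at least one of the following holds}: (a) $I \in F$; (b) $\interp{I} \subseteq \interp{I'}$ for some $I' \in F$; (c) $I \in C$. The base case $k=1$ is immediate from lines 4--10. For $k \geq 2$, pick any two distinct items $i, i' \in I$ and set $I_1 = I \setminus \{i\}$, $I_2 = I \setminus \{i'\}$; both are irredundant of size $k-1$ (a short contradiction argument shows that sub-itemsets of irredundant itemsets remain irredundant), with $|I_1 \cap I_2| = k-2$. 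Applying the inductive hypothesis to $I_1$ and $I_2$: if either is in case (a) or (b), then monotonicity of $\interp{\cdot}$ combined with $\interp{I} \subsetneq \interp{I_j}$ immediately yields (b) for $I$ via the same witness. Otherwise both $I_1, I_2$ sit in $C$ at the start of iteration $k-1$, so the loop considers the pair; the conditions of Definition~\ref{def:meet} all hold (non-emptiness by assumption, and the strict inclusions $\interp{I} \subsetneq \interp{I_j}$ by irredundancy of $I$), so $I_1 \sqcap I_2 = I$ is produced, and lines 14--18 deposit $I$ into exactly one of (a), (b), or (c).

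Instantiating the invariant at $I^\star$ finishes the proof: since $I^\star$ is fair, case (c) is impossible (a fair itemset the algorithm considers either enters $F$ directly, giving (a), or is skipped as subsumed, giving (b)), so $\vec{x} \in \interp{I^\star} \subseteq \interp{I'}$ for some $I' \in F$. The main obstacle is the iteration bookkeeping combined with the subsumption pruning on line 14: one must verify that every sub-itemset needed by the induction either reaches $C$ at the appropriate moment or is subsumed by some $I' \in F$ that still covers $\vec{x}$, and both possibilities are secured by monotonicity of $\interp{\cdot}$ under itemset inclusion, so the induction never loses a needed witness.
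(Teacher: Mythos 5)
Your proof is correct, and its skeleton is the same as the paper's: the forward inclusion falls out of the disjointness checks at lines 6 and 15, and the reverse inclusion is obtained by assembling, for each $H \in U$, a witness item that contains $\vec{x}$ and is disjoint from $H$. The genuine difference is in how the argument is finished. The paper's proof ends by simply asserting that ``either the itemset $I$ or another itemset $I'$ such that $\interp{I} \subseteq \interp{I'}$ is eventually enumerated by the algorithm''; it never checks that the strict-inclusion preconditions of the meet operator hold for the pairs needed to generate $I$, nor that the subsumption pruning at line 14 cannot silently discard an intermediate itemset that the enumeration relies on. Your strong induction supplies exactly the missing content: passing to an irredundant witness itemset (and noting that irredundancy is inherited by sub-itemsets) guarantees the strict inclusions $\interp{I} \subset \interp{I_1}$ and $\interp{I} \subset \interp{I_2}$ required for the meet to exist, and the three-case invariant ($I \in F$, or $\interp{I}$ subsumed by some member of $F$, or $I \in C$) shows that line-14 pruning can only ever replace a needed itemset by a fair one whose interpretation still contains $\vec{x}$, so no witness is lost; the observation that a fair itemset reaching line 15 can never land in $C'$ then closes the argument. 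In short, you follow the paper's route but actually prove its crucial ``eventually enumerated'' claim, which the paper leaves as an assertion; the paper's version buys brevity, yours buys a proof that would survive scrutiny of the interaction between the meet preconditions and the pruning optimization.
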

\begin{proof}
We prove the equality of the two sets by showing that the first is included in the second and vice-versa. Specifically:
\begin{itemize}
    \item Consider any instance $\vec{x} \in \bigcup_{I \in F} \interp{I}$, we show that for all $H \in U$ we have $\vec{x} \not\in H$, which implies $\vec{x} \in \feats \setminus \bigcup_{H \in U} H$. Indeed, the algorithm ensures that for all $I \in F$ we have that $\forall H \in U: \interp{I} \cap H = \emptyset$ (cf. lines 6-7 and lines 15-16).
    
    \item Consider any instance $\vec{x} \in \feats \setminus \bigcup_{H \in U} H$, we show that there exists $I \in F$ such that $\vec{x} \in \interp{I}$, which implies $\vec{x} \in \bigcup_{I \in F} \interp{I}$. We first observe that, for all $H_j \in U$, the call $\Call{Gen\_Itemsets}{U}$ returns a set of at most $2d$ itemsets: 
    \[
    C_j = \{\{i_1\},\ldots,\{i_{2d}\}\},
    \]
    such that $\vec{x} \in \interp{i_k}$ for some $i_k$; we refer to such item $i_k$ as the \emph{witness} for $H_j$. The itemset $I$ includes all the witnesses for each $H_j \in U$, thus ensures that $\vec{x} \in \interp{I} = \bigcap_{i \in I} \interp{i}$. The conclusion follows by observing that either the itemset $I$ or another itemset $I'$ such that $\interp{I} \subseteq \interp{I'}$ is eventually enumerated by the algorithm.
\end{itemize}
\end{proof}

\subsubsection{Implementation}
We implemented the synthesis algorithm presented in this section in C++. Our implementation leverages the data-independent stability analysis used by the resilience verifier presented in~\cite{CalzavaraCLMO21}, which we simply leverage as a black box. Although the implementation is a rather direct translation of the pseudocode in Algorithm~\ref{alg:synthetize}, a few important details are worth discussing. A first point to note is that our implementation supports a user-specified early stopping criterion in terms of a maximum number of iterations of the algorithm. This is useful because, like Apriori, our algorithm has an exponential time complexity with respect to the number of items in the worst case~\cite{AgrawalS94}. However, we empirically observed in our experimental evaluation that a small number of short conditions already allow one to largely characterize the fairness guarantees of tree-based classifiers, hence restricting the number of iterations saves analysis time, while leading to just a small loss in precision in practice. Note that early stopping preserves the soundness of the analysis, while obviously sacrificing its completeness.

Another important aspect of the implementation is the treatment of \emph{categorical} features, i.e., features which do not take arbitrary values in $\R$, but may only take values from a set of unordered options, e.g., gender or ethnicity. Categorical features are not part of our model for simplicity, however, they are not difficult to handle in practice. \writtenbyLC{In particular, we handle binary features just like numerical features, since the threshold 0.5 can be used to tell apart the two possible values of the feature. As for other categorical features, we deal with them by using one-hot-encoding, i.e., we generate a new feature $f$ for each possible value of the original feature, with the idea that just one such feature will have a value greater than 0.5. We then enforce that an itemset cannot contain two or more items of the form $x_f > 0.5$ that predicate on features resulting by one-hot-encoding the same categorical feature. Since the categorical features and the result of their one-hot-encoding are known, the implementation of the meet operator checks whether the generated itemsets respect the described integrity condition and the meet is considered undefined in case of violations. The resulting itemsets in the result $F$ are post-processed in order to improve their readability. In particular, we collapse itemsets that differ just for formulas predicating on different values of the same categorical feature by merging these formulas in a single formula using a disjunction.}

\writtenbyLC{Finally, we mention a few optimizations of the implementation that we borrowed and adapted from Apriori~\cite{AgrawalS94}. The items within an itemset and the itemsets themselves are kept ordered to reduce the number of executions of the meet operator at line 13 and optimize its checks. The items within an itemset are ordered as follows: first, we follow the lexicographic order of the feature involved in the item; then, for any two items involving the same feature, the formulas with predicate $\leq$ precede the ones with predicate $>$; finally, the items with predicate $\leq$ are ordered by decreasing value of the threshold, while we do the opposite for the other items. The order of the items makes it possible to define the prefix of an itemset of size $k$ as the ordered sequence of its first $k-1$ items. Itemsets are ordered to examine itemsets sharing the same prefix consecutively, since only itemsets sharing the same prefix satisfy the preconditions for performing the meet operation. This allows one to reduce the number of times that the meet operation is attempted between two itemsets that do not satisfy the preconditions of the meet definition. In particular, two ordered itemsets of size $k$ are lexicographically ordered using the established ordering of the items: at first, their prefixes are compared starting from the first item to the $(k-1)$-th item; then, if the two itemsets share the same prefix, they are ordered by comparing their last, different item. Moreover, the order of the items followed by the itemsets sharing the same prefix enables an optimization of the checks of the meet operation. Indeed, for any itemset $I'$ listed after an itemset $I$ sharing the same prefix we have $\interp{I} \not\subset \interp{I'}$. As a result, the implementation of the second condition of the meet $I \sqcap I'$ just needs to check $\interp{I'} \subset \interp{I}$. Another optimization involves the condition at line 15, which would require scanning the entire set $U$ for every generated itemset. We optimize this step by assigning to each $H \in U$ an identifier $\textit{id}_H \in \mathbb{N}$ and associating to each itemset $I \in C$ a set $\text{ids}_I = \{\text{id}_H : H \in U \wedge H \cap \interp{I} = \emptyset \}$. Then, the set of the identifiers associated to the meet $I = I_1 \sqcap I_2$ of two itemset $I_1$ and $I_2$ is $\text{ids}_I = \text{ids}_{I_1}  \cup \text{ids}_{I_2}$, since $I$ identifies less instances than both $I_1$ and $I_2$ by definition. This information is useful, because the condition at line 15 amounts just to checking whether $|\text{ids}_I| = |U|$, rather than scanning $U$.}

\section{Experimental Evaluation}
\label{sec:experiments}
In this section, we experimentally assess the effectiveness of our global fairness verification approach along different axes.

\subsection{Methodology}
    We evaluate our proposal on decision tree ensembles trained over three public datasets \writtenbyLC{used in the fairness literature~\cite{RanzatoUZ21}, associated with a binary classification task: Adult\footnote{https://archive.ics.uci.edu/ml/datasets/adult}, German\footnote{https://archive.ics.uci.edu/ml/datasets/statlog+(german+credit+data)} and Health\footnote{https://www.kaggle.com/c/hhp}. We use the attribute \textit{sex} as the binary sensitive feature. We pre-process the datasets by following three steps: (i) we normalize numerical features in the interval $[0, 1]$; (ii) we perform the one-hot-encoding of all the categorical features; (iii) we remove the features and instances containing missing values. The characteristics of the datasets are reported in Table~\ref{tab:datasets}, along with the number of categorical features before and after step (ii).}

\begin{table}[]
    \centering\caption{Dataset statistics (we report in parentheses the number of categorical features after one-hot-encoding)}
    \begin{tabular}{c|c|c|c|c}
    \textbf{Dataset} & \textbf{\#Num features} & \textbf{\#Cat features} & \textbf{\#Instances} & \textbf{\%Positive} \\
    \hline
    Adult & 6 & 7 (81) & 45,222 & 25\%\\
    German & 7 & 13 (49) & 1,000 & 70\%\\
    Health & 93 & 2 (10) & 166,842 & 68\%\\
    \end{tabular}    \label{tab:datasets}
\end{table}

We partition datasets into a training set $\dtrain$ and a test set $\dtest$, using 80-20 stratified random sampling. For each dataset, we use $\dtrain$ to train standard Random Forest (RF) models as available in sklearn~\cite{scikit-learn}. Experiments are then conducted on both $\dtest$ and a synthetic dataset $\drand$ including 100k random instances. The rationale for having these two sets is that $\dtest$ is supposed to follow the same distribution of $\dtrain$, hence it represents expected inputs at test time, while $\drand$ provides a much larger view of the entire feature space, which is interesting because our fairness guarantees are global and generalize beyond the test set. Both $\dtest$ and $\drand$ are thus useful to investigate different aspects of our proposal. We write $\dany$ in formulas to stand for any between $\dtest$ and $\drand$, using the notation $\vec{x} \in \dany$ to identify instances in $\dany$ while ignoring their class label when $\dany = \dtest$.

The experiments are designed to answer three key research questions:
\begin{enumerate}
    \item What is the \emph{precision} of the analysis performed by the synthesis algorithm? In other words, do the synthesized fairness conditions cover well the portions of the feature space where lack of causal discrimination is ensured?
    
    \item What is the \emph{explainability} of the results returned by the synthesis algorithm? Can we effectively characterize the fairness guarantees of classifiers in terms of a small number of conditions of limited complexity?
    
    \item What is the \emph{performance} of the synthesis algorithm? How is the analysis running time influenced by the size and complexity of the classifiers?
\end{enumerate}

\subsection{Precision of the Analysis}
In our first experiment, we estimate the precision of our verification approach by comparing the set of hyper-rectangles $U$ returned by the data-independent stability analysis with the formulas $F$ returned by the synthesis algorithm. The completeness theorem ensures that $F$ identifies the subset of the feature space disjoint from all the hyper-rectangles in $U$, provided that the synthesis algorithm runs up to completion. In practice, however, we expect the synthesis algorithm to be normally subject to early stopping, due to the exponential blow-up in the number of candidates to be analyzed at the different iterations and the increasing complexity of the synthesized formulas. We thus estimate the precision of a partial output obtained when the synthesis algorithm is forcefully stopped after \empirical{six} iterations, because we are interested in formulas with a small number of items, amenable for human understanding. Later experiments assess the impact of the number of iterations on the analysis results and running times.

We first compute the casual discrimination score on the dataset $\dany$ based on the set of hyper-rectangles $U$. In particular, we observe that an instance $\vec{x}$ can only suffer from causal discrimination when it belongs to some $H \in U$, so the casual discrimination score on $\dany$ can be conservatively approximated as follows:
\[
d(U,\dany) = \dfrac{|\{\vec{x} \in \dany ~|~ \exists H \in U: \vec{x} \in H\}|}{|\dany|}
\]

Similarly, we can use the formulas in $F$ to reason about fairness by observing that any instance satisfying some formula in $F$ cannot suffer from causal discrimination. This allows us to compute the following over-approximation of $d$:
\[
\ccd(F,\dany) = 1 - \dfrac{|\{\vec{x} \in \dany ~|~ \exists I \in F: \vec{x} \in \interp{I}\}|}{|\dany|}
\]

Observe that $\ccd$ is always an upper bound of $d$, because our analysis is sound. Table~\ref{tab:results} reports the values of $d$ and $\ccd$ computed for different datasets and models. We observe that $d$ and $\ccd$ coincide \writtenbyLC{for the very large majority of the} cases, meaning that the formulas in $F$ accurately characterize the subset of the feature space which is disjoint from all the hyper-rectangles in $U$, even when enforcing early stopping after \empirical{six} iterations. \writtenbyLC{Note that $\ccd$ does not coincide with $d$ just in three cases, all associated with the German dataset, but we experimentally assessed that the two scores coincide also there when increasing the number of iterations of the synthesizer to seven.} The table also reports the accuracy $a$ on the test set $\dtest$ to show that all the analyzed models perform reasonably well in practice.

\begin{table}[t]
    \centering\caption{Computed measures for different datasets and models}
    \begin{tabular}{c|c|c||c|c|c| |c|c}
    \multicolumn{3}{c}{} & \multicolumn{3}{c}{$\dtest$} & \multicolumn{2}{c}{$\drand$} \\
    Dataset & \#Trees & Depth & $a$ & $d$ & $\ccd$ & $d$ & $\ccd$ \\ 
    \hline
    \multirow{6}{*}{Adult}
      & 5 & 5 & 0.811 & 0.003 & 0.003 & 0.348 & 0.348\\ 
    & 9 & 5 & 0.826 & 0.003 & 0.003 & 0.323 & 0.323 \\ 
    & 13 & 5 & 0.824 & 0.014 & 0.014 & 0.330 & 0.330\\ 
    \cline{2-8}
    & 5 & 6 & 0.833 & 0.004 & 0.004 & 0.113 & 0.113\\ 
    & 9 & 6 & 0.834 & 0.004 & 0.004 & 0.206 & 0.206\\ 
    & 13 & 6 & 0.840 & 0.006 & 0.006 & 0.235  & 0.235\\ 
    \hline
    \multirow{6}{*}{German}
      & 5 & 5 & 0.720 & 0.230 & 0.230 & 0.349 & 0.349 \\ 
    & 9 & 5 & 0.725 & 0.230 & 0.230 & 0.349 & 0.349 \\ 
    & 13 & 5 & 0.725 & 0.240 & 0.240 & 0.373 & 0.373\\ 
    \cline{2-8}
    & 5 & 6 & 0.745 & 0.230 & 0.230 & 0.371 & 0.371\\ 
    & 9 & 6 & 0.745 & 0.325 & 0.325 &  0.437 & 0.450  \\ 
    & 13 & 6 & 0.730 & 0.375 & 0.415 & 0.486 & 0.571\\ 
    \hline
    \multirow{6}{*}{Health}
  & 5 & 5 & 0.801 & 0.008 & 0.008 & 0.027 & 0.027\\ 
    & 9 & 5 & 0.802 & 0.008 & 0.008 & 0.027 & 0.027\\ 
    & 13 & 5 & 0.803 & 0.016 & 0.016 & 0.028 & 0.028 \\ 
    \cline{2-8}
    & 5 & 6 & 0.810 & 0.018 & 0.018 & 0.209 & 0.209 \\ 
    & 9 & 6 & 0.809 & 0.018 & 0.018 & 0.209 & 0.209\\ 
    & 13 & 6 & 0.810 & 0.020 & 0.020 & 0.259 & 0.259\\ 
    \hline
    \end{tabular}
    \label{tab:results}
\end{table}

\subsection{Explainability of the Results}
To assess to what extent the formulas $F$ are explainable, we carry out \writtenbyLC{selected experiments on the most complex models that we trained on the considered datasets, i.e., ensembles of 13 trees with maximum depth six, because they are expected to be the most challenging to explain.} In the first experiment, we assess how the percentage of instances for which $F$ is able to provide a proof of fairness grows when varying the number of iterations of the synthesis algorithm. Of course, we compute this percentage with respect to the number of instances which do not suffer from causal discrimination, i.e., which might actually admit a proof of fairness. Since a run with $k$ iterations can only produce formulas involving at most $k$ items, this experiment provides insights on how complex sufficient conditions for fairness turn out to be in practice. \writtenbyLC{Figure~\ref{fig: coverage} plots the observed trend. The experimental results for the Adult dataset show that just \empirical{four} iterations of the algorithm suffice to establish fairness proofs for more than \empirical{90\%} of the instances of both $\dtest$ and $\drand$, while just \empirical{two} iterations are enough to cover basically all the instances of the two sets for the Health dataset. The German dataset is the most challenging, since \empirical{five} iterations of the algorithm are needed to cover around \empirical{80\%} of the instances in the two sets. In the end, the experiment shows that short logical formulas including at most \empirical{five} items are expressive enough to establish useful fairness proofs in practice, while being small enough to be easily understandable by human experts.}

\begin{figure*}[t]
  \centering
    \subfloat[]{\includegraphics[width=.33\textwidth]{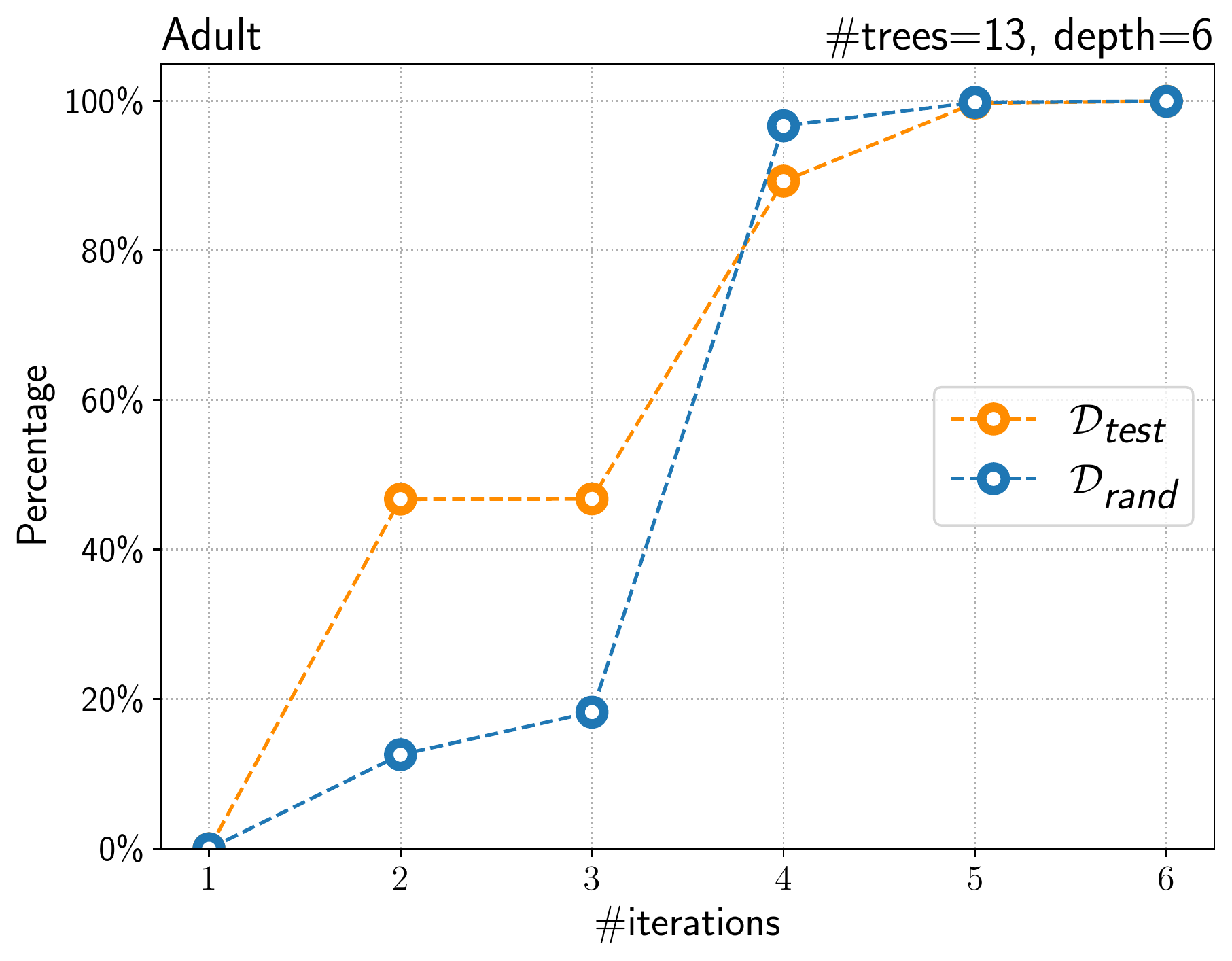}}
  \hfill
    \subfloat[]{\includegraphics[width=.33\textwidth]{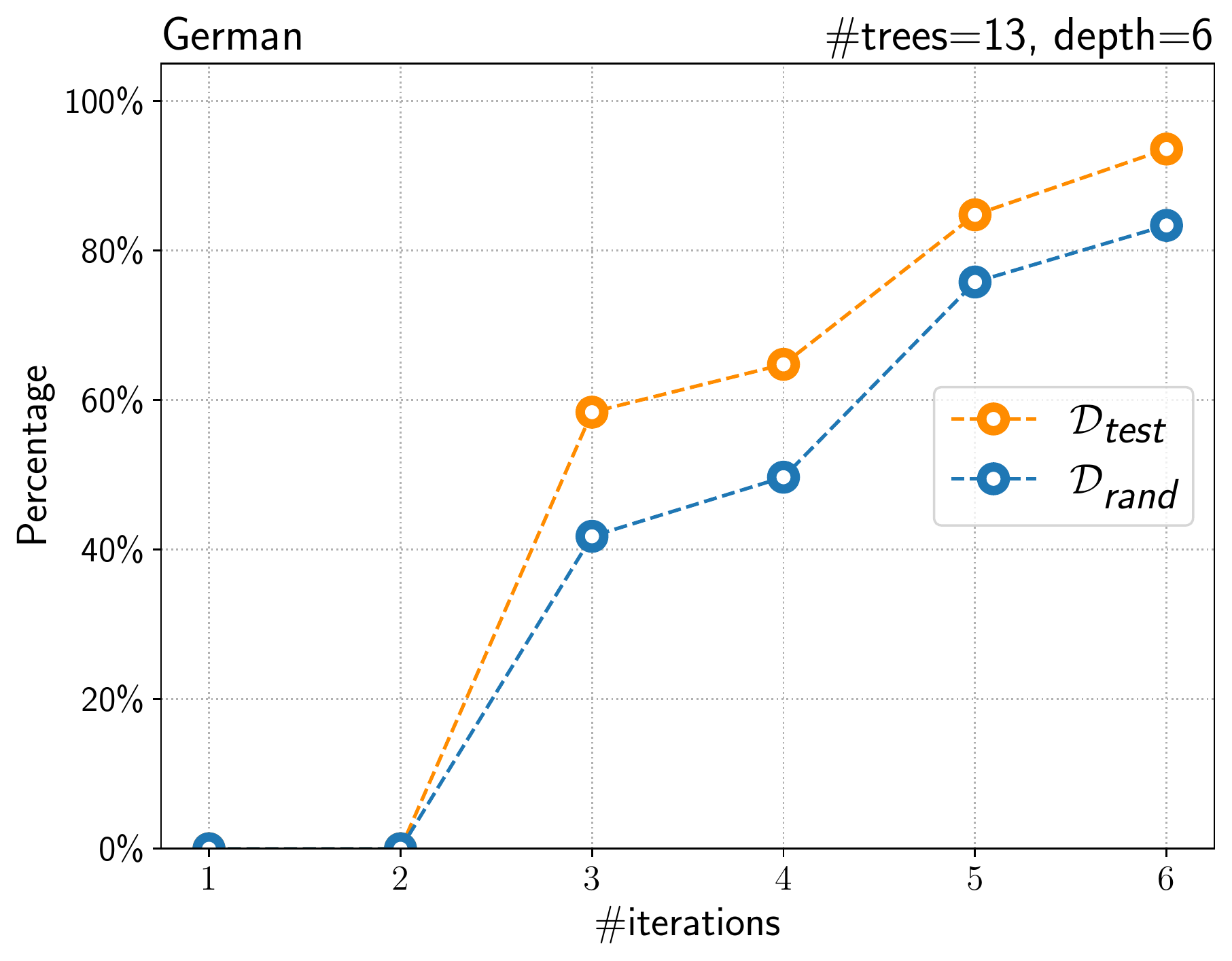}}
  \hfill
    \subfloat[]{\includegraphics[width=.33\textwidth]{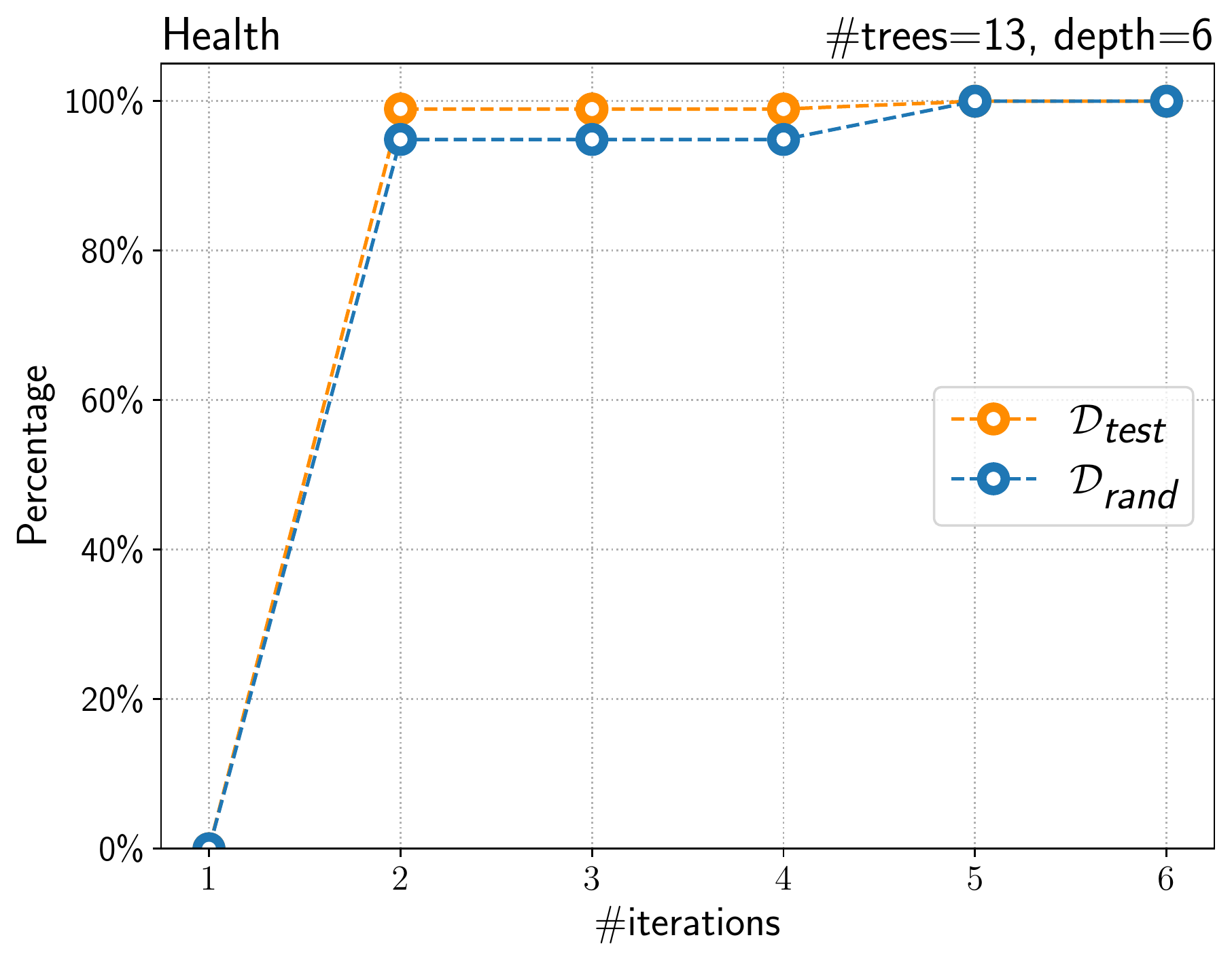}}
  \caption{Percentage of instances for which $F$ is able to provide a proof
of fairness.}
  \label{fig: coverage}
\end{figure*}

Clearly, however, our first experiment provides just a partial picture of explainability, because it captures information about the complexity of formulas, but it does not tell how many formulas should be taken into consideration by human analysts to draw useful conclusions. \writtenbyLC{Indeed, we observe that the amount of formulas can significantly grow as the number of iterations of the synthesis algorithm increases. Figure~\ref{fig: number-conds} plots how the number of synthesized formulas grows when varying the number of iterations of the synthesis algorithm, showing an exponential trend for the Adult and German datasets. We see an interesting trend in the results for the Health dataset: the number of formulas does not necessarily increase from one iteration to another, since in some cases the synthesizer is not able to produce longer formulas that provide proof of fairness. However, the figure shows a significant increase in the number of formulas when five iterations are performed, revealing that later iterations could mine longer formulas even when, during an iteration, new formulas are not discovered.}

\begin{figure*}[t]
  \centering
    \subfloat[]{\includegraphics[width=.33\textwidth]{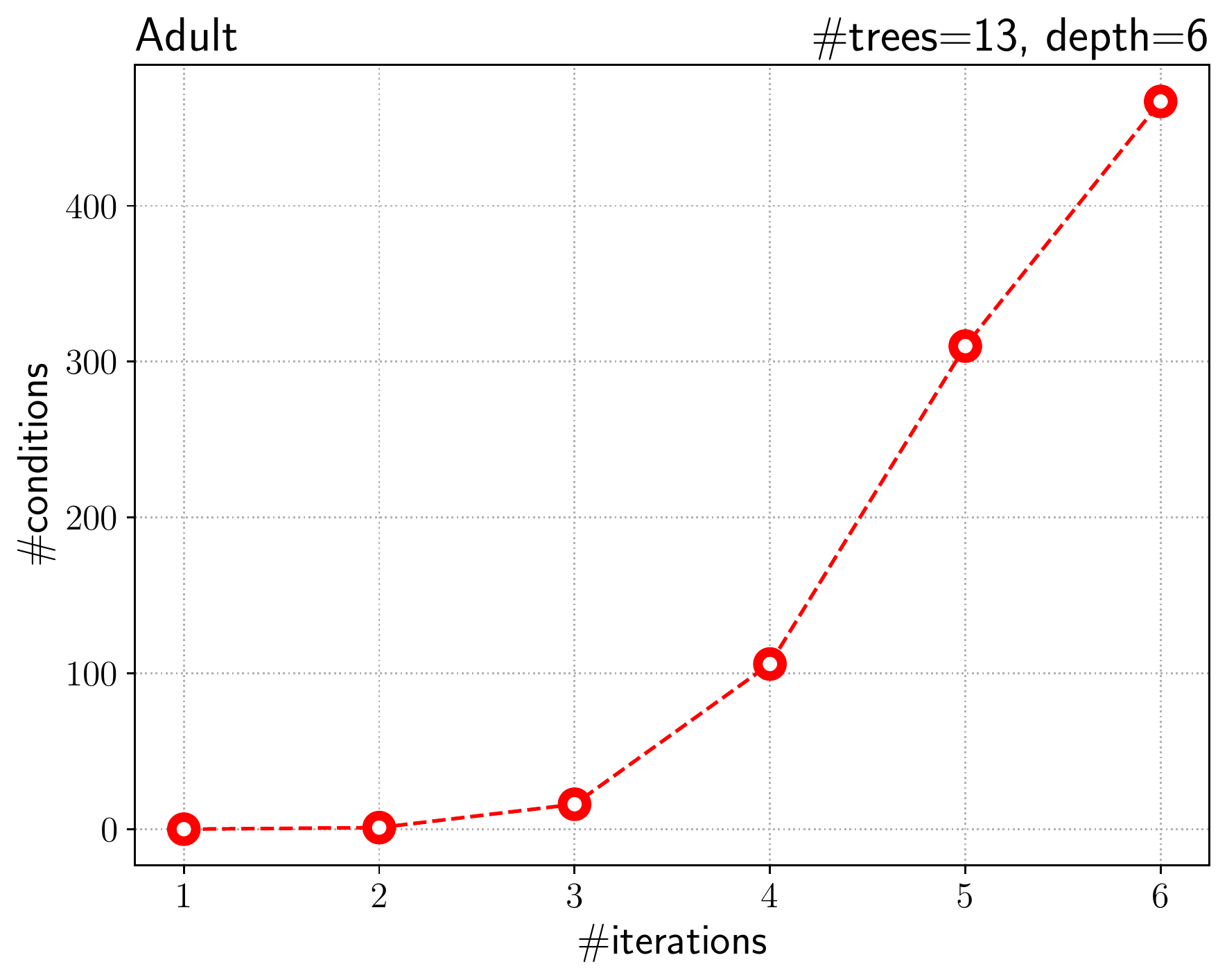}}
  \hfill
    \subfloat[]{\includegraphics[width=.33\textwidth]{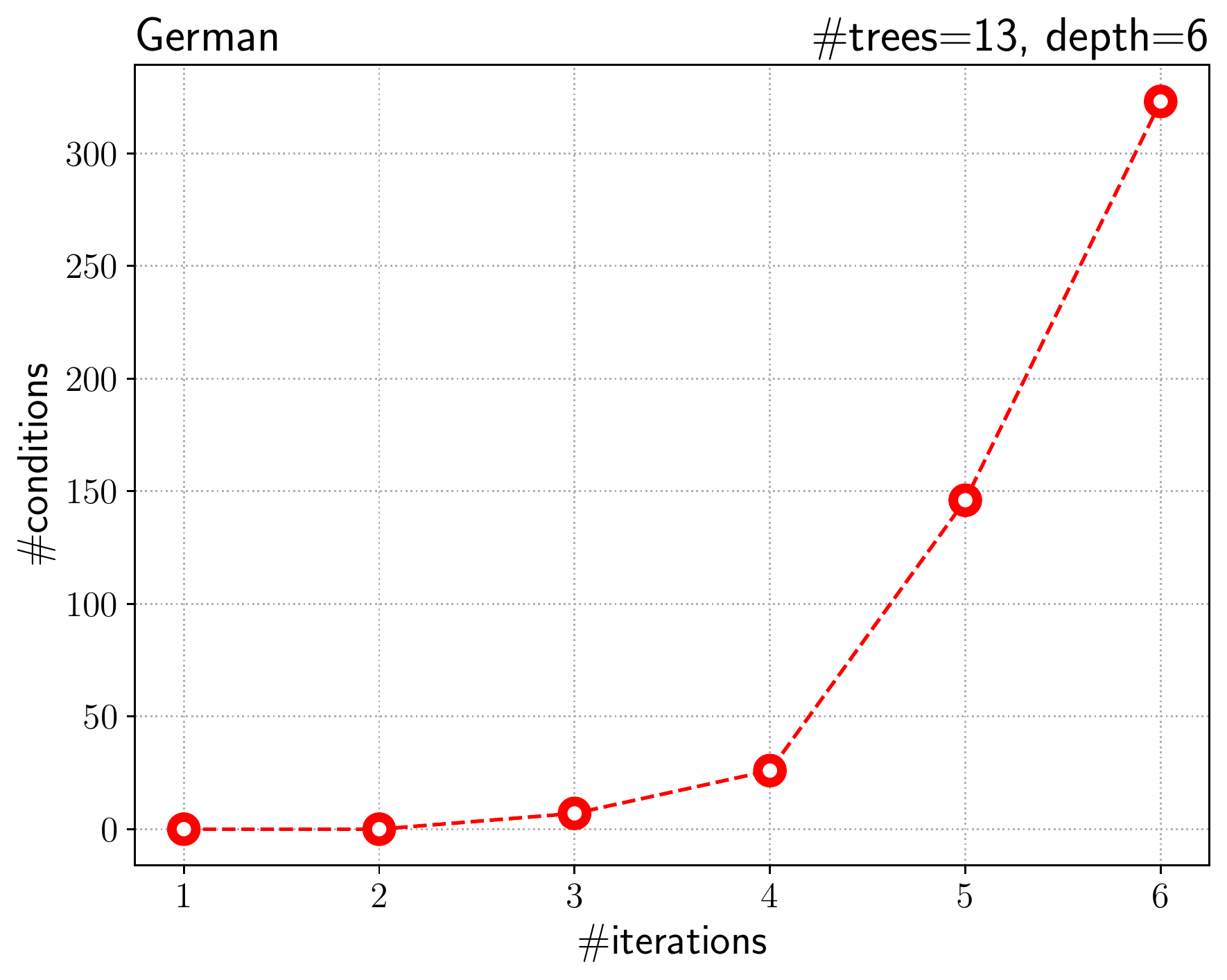}}
  \hfill
    \subfloat[]{\includegraphics[width=.33\textwidth]{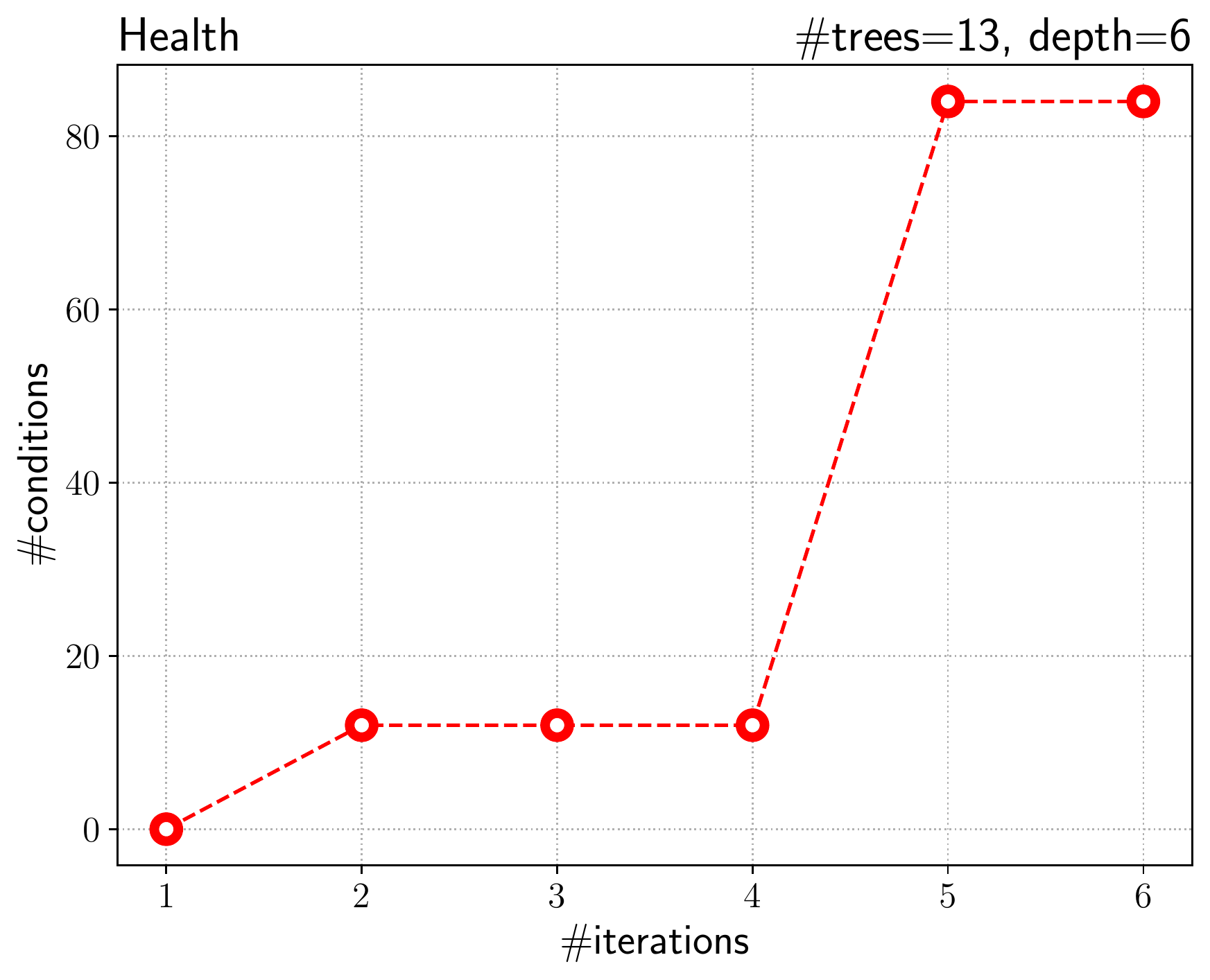}}
  \caption{Number of formulas in $F$ when varying the number of iterations.}
  \label{fig: number-conds}
\end{figure*}

Nevertheless, we can show that the number of \emph{important} formulas for human analysts is relatively small in practice. \writtenbyLC{We estimate the importance of a formula by counting the number of instances in $\dtrain$ which are covered by the formula: the intuition is that the more instances are covered by the formula, the more the formula is expressive to prove fairness according to the training data. We identify the set of the top $k$ most important formulas by means of a greedy strategy. We first select the most important formula in terms of number of covered instances, we then remove the covered instances from $\dtrain$ before selecting the second most important formula and so on, until $k$ formulas have been selected. 
Thus, by fixing the number of iterations of the synthesis algorithm and varying the number $k$, we can assess how the percentage of instances proved fair by the top $k$ formulas grows.
Figure~\ref{fig: cover-top} plots the observed trend for six iterations of the synthesis algorithm. The figure shows that for Adult and Health just the top $10$ formulas suffice to cover around $90\%$ of the $\dtest$ instances, while for German the top $20$ formulas allow one to establish a proof of fairness for $80\%$ of the instances in $\dtest$. This shows that a small number of formulas is sufficient to characterize the fairness guarantees on the test data for all the considered datasets. In general, more formulas are needed to cover synthetic instances in $\drand$. Indeed, the figure shows that, while the top $20$ formulas still allow one to cover more than $90\%$ of the instances in $\drand$ for Health, they cover just around $30\%$ of the instances of Adult and around $60\%$ of the instances of German. The difference between the results on $\dtest$ and $\drand$ can be explained by observing that the synthesized conditions depend on the thresholds learned from $\dtrain$, which is the same set used to rank the conditions, hence the top conditions generalize better on a test set with the same distribution of $\dtrain$ than on a random set. The good news is that a small number of conditions still suffice to cover a non-negligible share of $\drand$ in all cases.}

\begin{figure*}[t]
\centering
    \subfloat[]{\includegraphics[width=.33\textwidth]{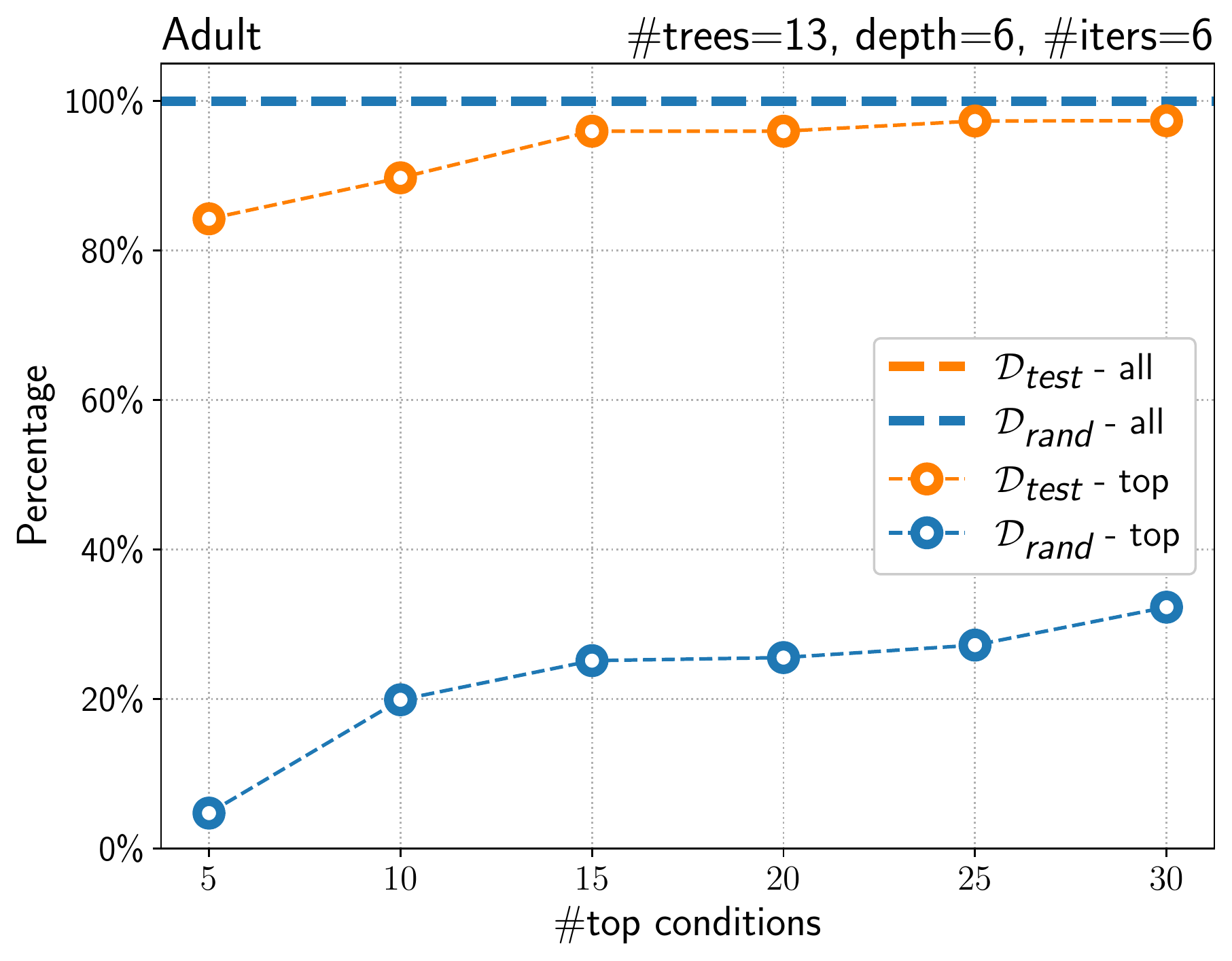}}
    \hfill
    \subfloat[]{\includegraphics[width=.33\textwidth]{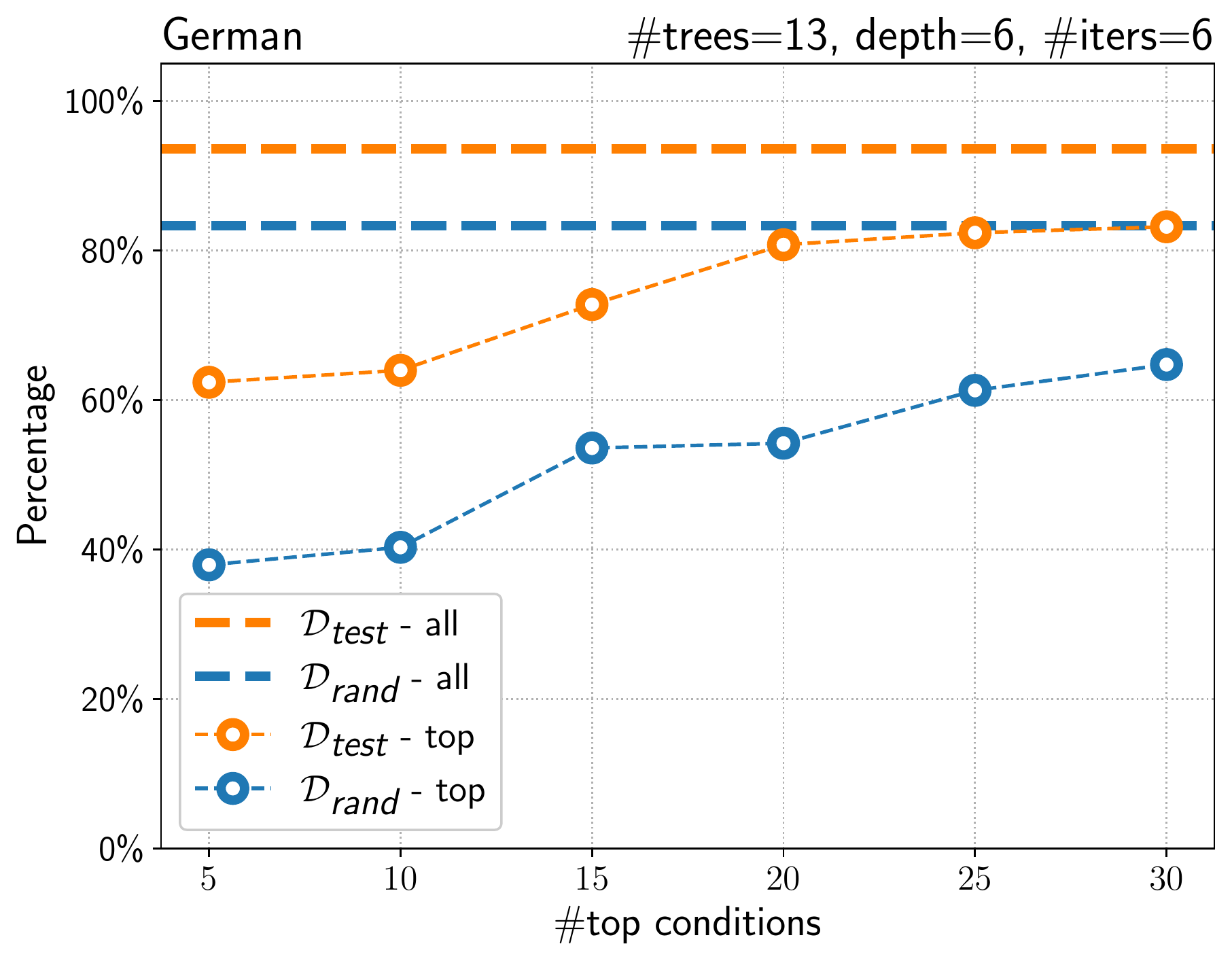}}
    \hfill
    \subfloat[]{\includegraphics[width=.33\textwidth]{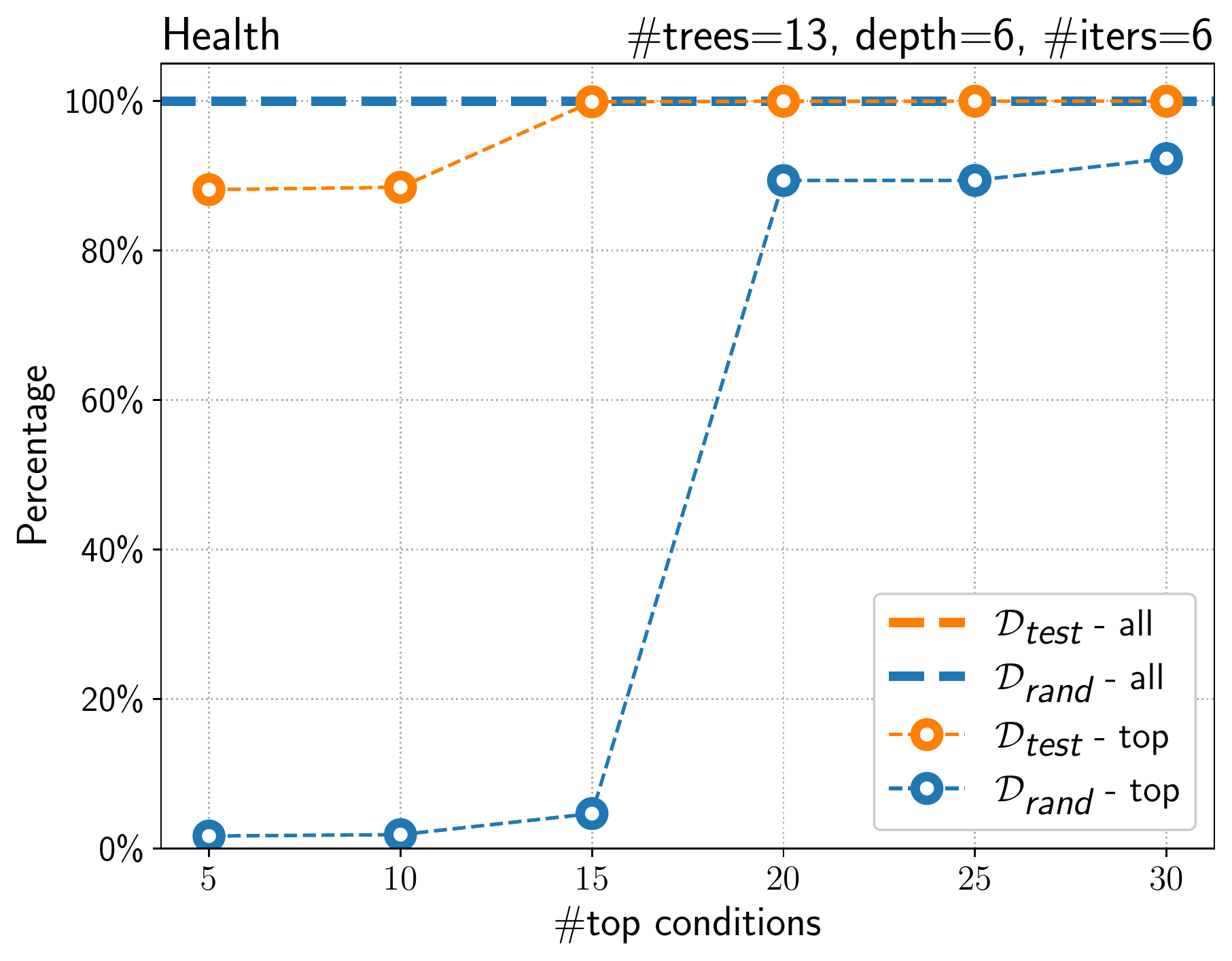}}
  \caption{Percentage of instances for which the top $k$ formulas of $F$ are able to provide a proof
of fairness.}
  \label{fig: cover-top}
\end{figure*}

We finally perform a more qualitative evaluation of the explainability of the synthesis algorithm by considering the German dataset as a case study. For space reasons, this is discussed in Appendix~\ref{sec:appendix}.

\subsection{Performance Evaluation}
We finally analyze the performance of the synthesis algorithm when varying different dimensions of our experimental setting. \writtenbyLC{Times are computed for our sequential implementation of the synthesis algorithm, running on a virtual machine with 20 cores, 98.8GB of RAM and Ubuntu 20.04.4 LTS on a server with an Intel Xeon Gold 6148 2.40GHz.} 

\writtenbyLC{First, we plot how the analysis times change when increasing the number of analysis iterations from one to seven. For the sake of readability, we only focus on ensembles of 13 trees with maximum depth six, which are the most complex models in our experimental evaluation and likely the most challenging models to analyze. The results are shown in Figure~\ref{fig: time}. The first observation is that all the models can be analyzed in a matter of minutes when performing six iterations of the algorithm, like in our previous experiments: the analysis takes around \empirical{30} minutes on the Adult dataset, \empirical{five} minutes on the German dataset and just \empirical{13} seconds on the Health dataset. The gap in the third dataset can be explained by the smaller number of categorical features therein. In general, the plots show an exponential growth of the analysis time as the number of iterations increases. Luckily, previous results show that conditions containing at most five items (i.e., generated after at most five iterations) are already enough to cover a large part of the feature space for all the analyzed models, hence further increasing the number of analysis iterations for them is unimportant in practice.}

\begin{figure*}[t]
  \centering
    \subfloat[]{\includegraphics[width=.33\textwidth]{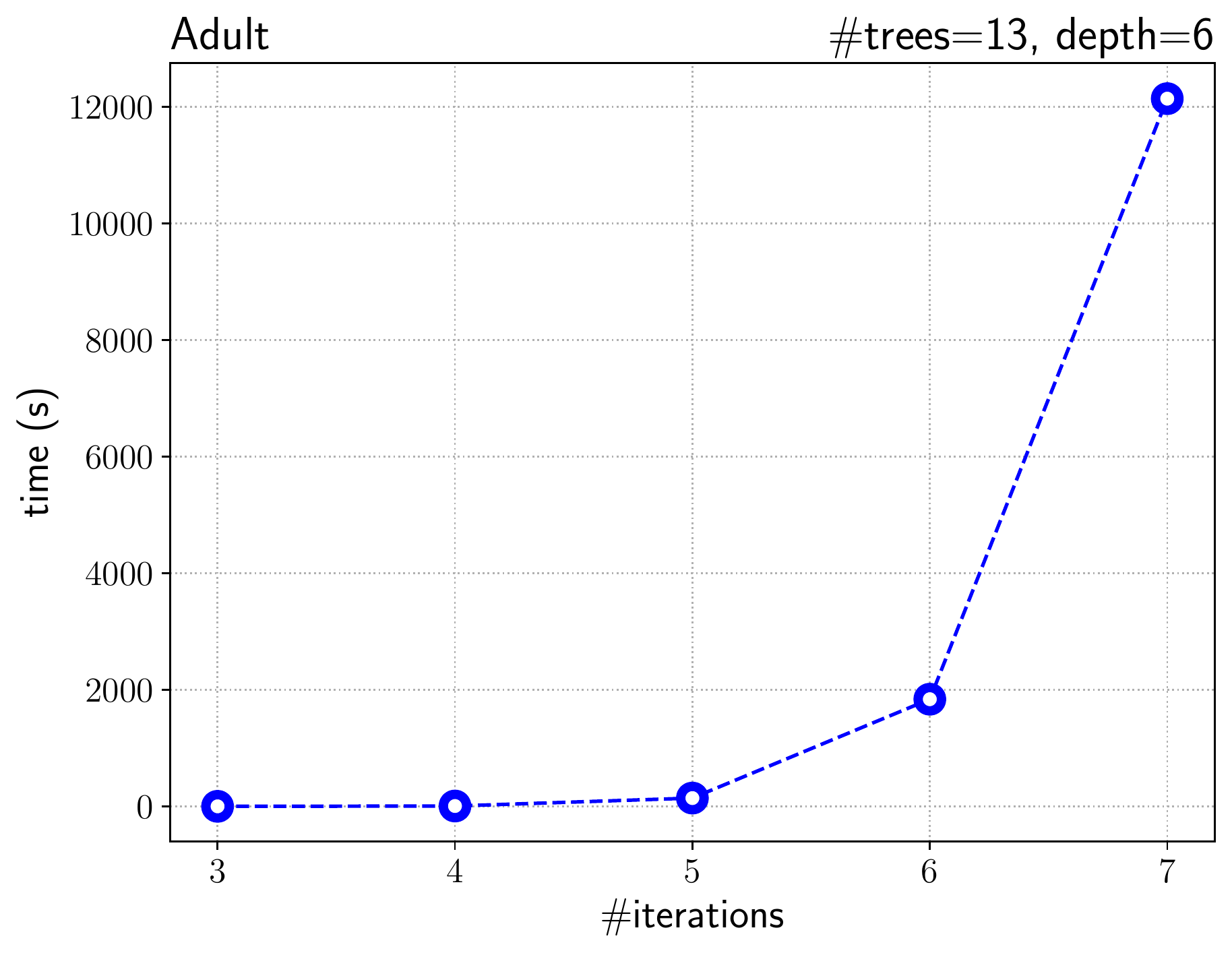}}
  \hfill
    \subfloat[]{\includegraphics[width=.33\textwidth]{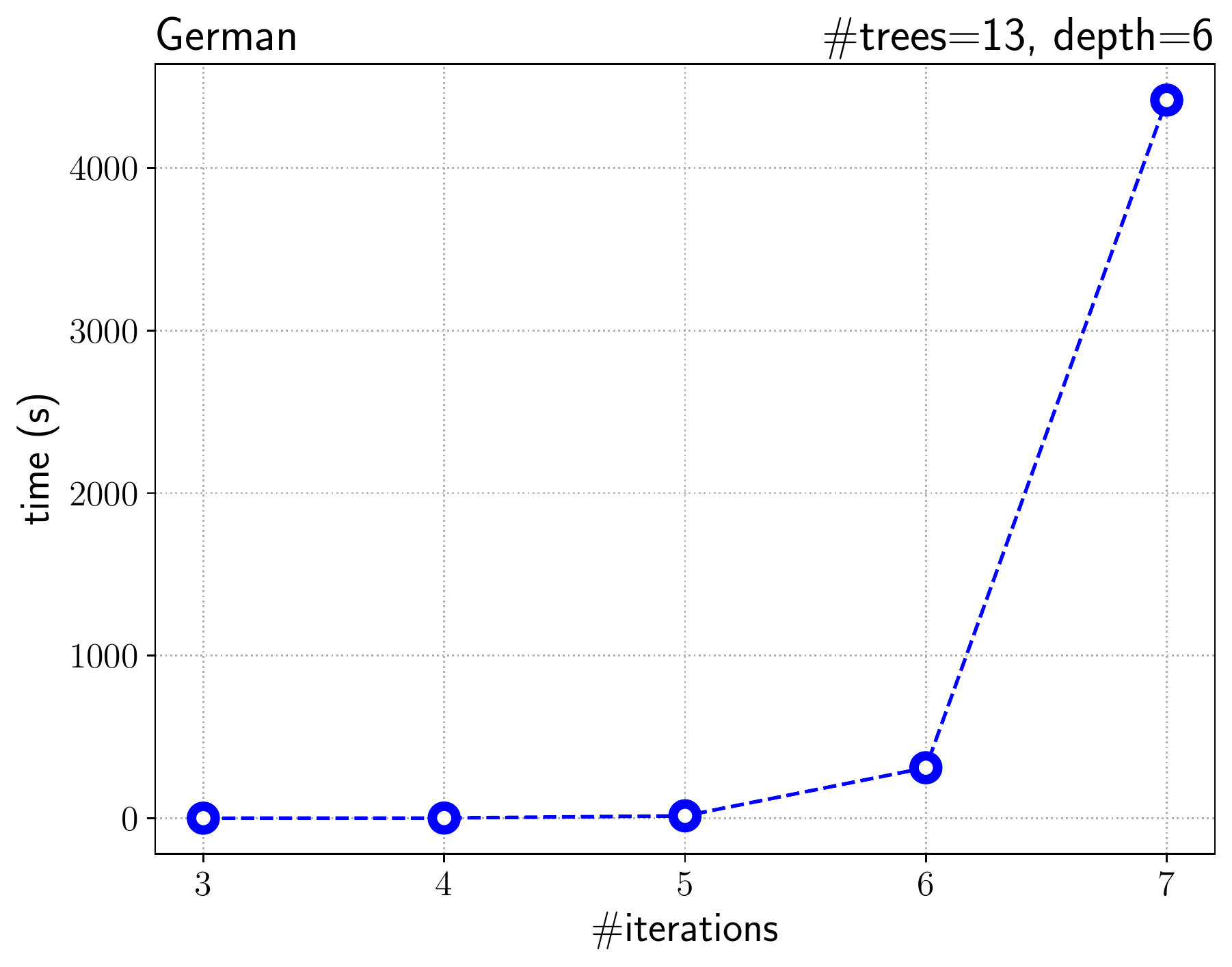}}
  \hfill
    \subfloat[]{\includegraphics[width=.33\textwidth]{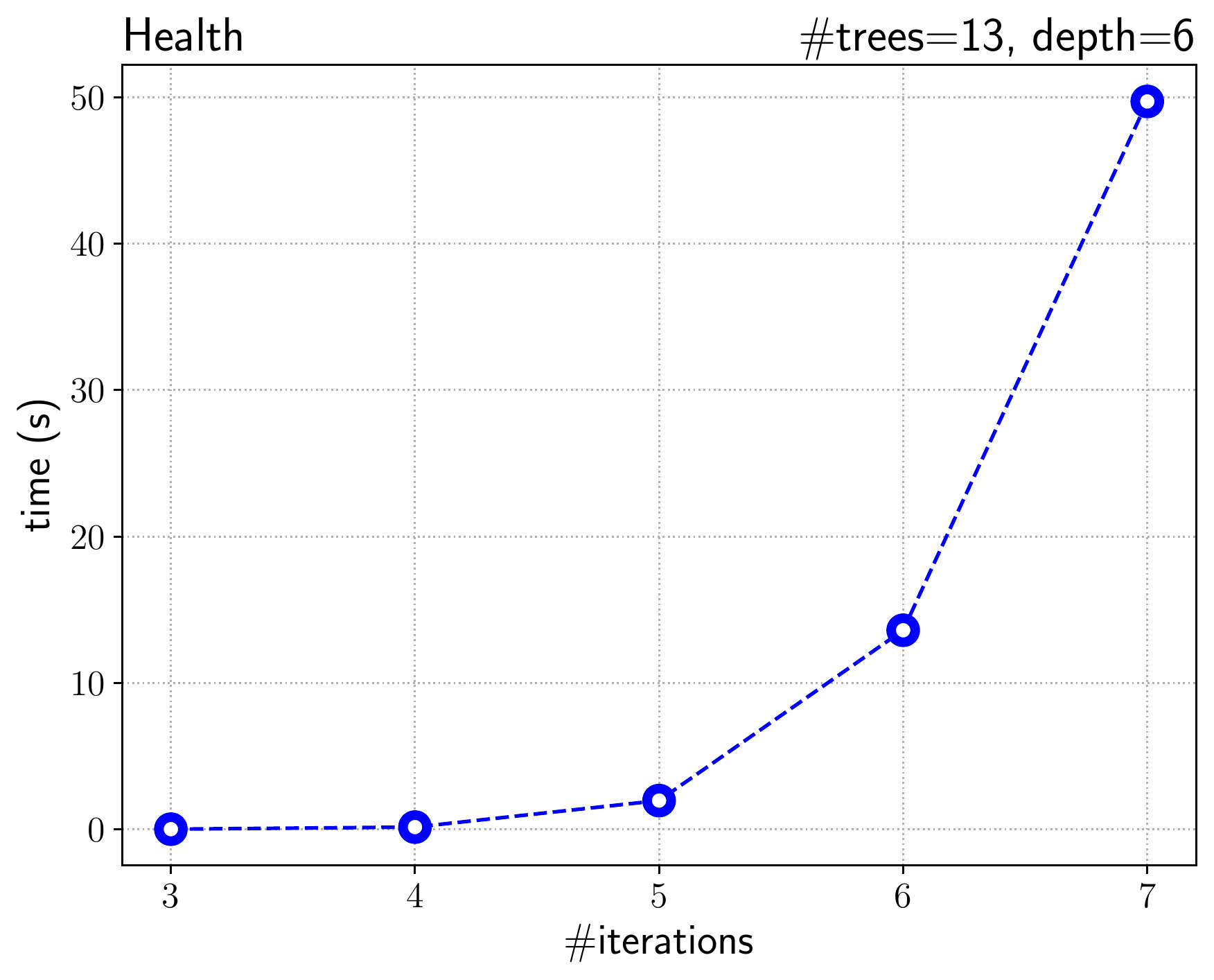}}
  \caption{Running times when varying the number of analysis iterations.}
  \label{fig: time}
\end{figure*}

We then show how the running times of the synthesis algorithm are affected by the number of trees in the ensemble, as well as by their depth. We first run the algorithm over ensembles including an increasing number of trees of at most depth six (Figure~\ref{fig: time-ntrees}) and then over ensembles including 13 trees of increasing maximum depth (Figure~\ref{fig: time-depth}), stopping the analysis after six iterations. \writtenbyLC{All the settings involving tree ensembles with at most 13 trees or trees with maximum depth six like in our previous experiments terminate in a matter of minutes, thus showing the practicality of our proposal.} Nevertheless, both plots show that the analysis times exhibit an exponential growth with respect to the complexity of the model, since the complexity of the analysis algorithm is exponential with respect to the number of items, which in turn depends on the number of features and thresholds occurring in the model. \writtenbyLC{Thus, the analysis time for more complex models, e.g., the tree ensembles with maximum depth seven trained on the Adult dataset, could significantly increase.} This further motivates the benefits of an iterative analysis approach as the proposed one, which allows one to leverage an early stopping criterion to collect partial (yet empirically precise) results even for cases where analysis convergence may turn out to be too expensive.

\begin{figure*}[t]
  \centering
    \subfloat[]{\includegraphics[width=.33\textwidth]{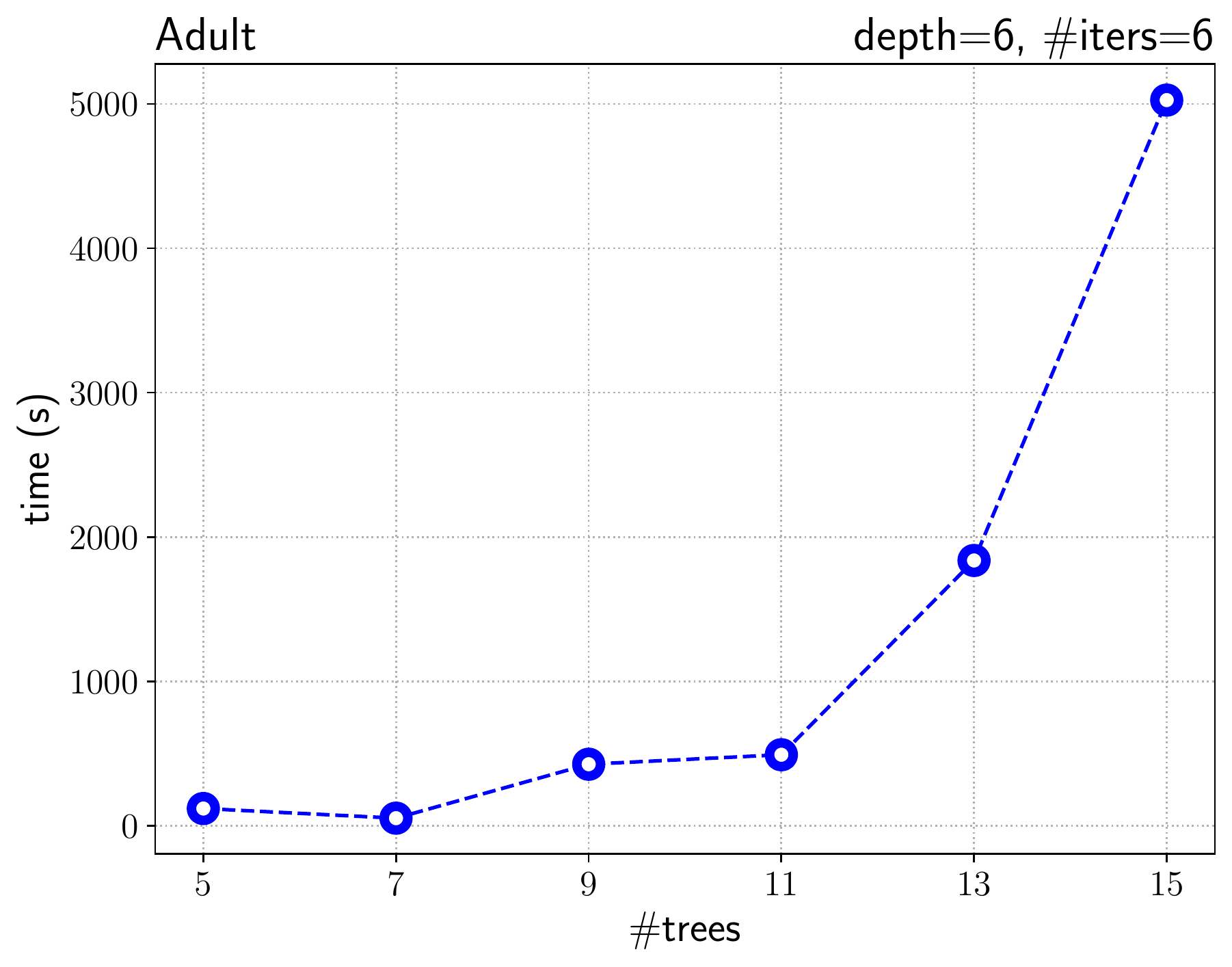}}
  \hfill
    \subfloat[]{\includegraphics[width=.33\textwidth]{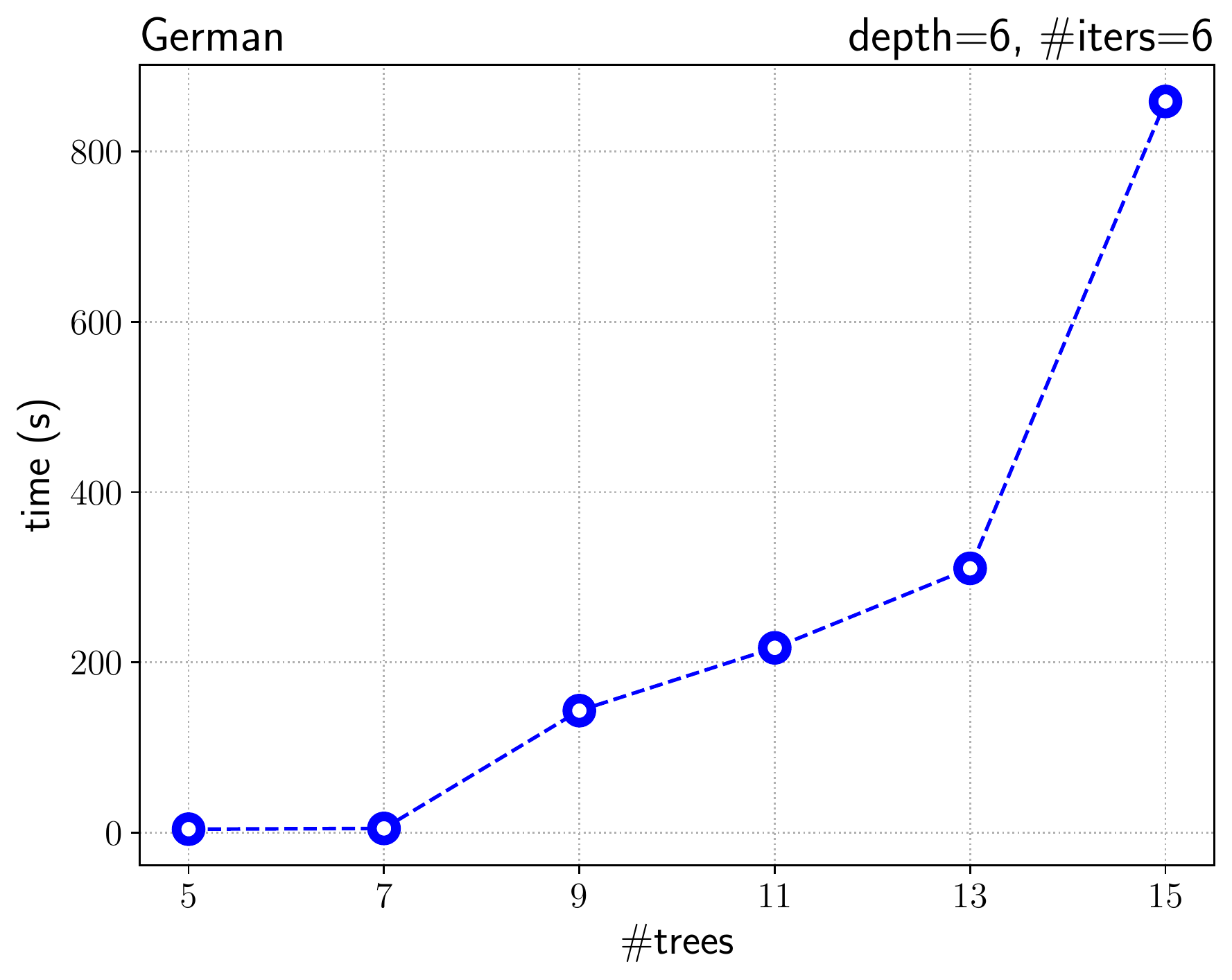}}
  \hfill
    \subfloat[]{\includegraphics[width=.33\textwidth]{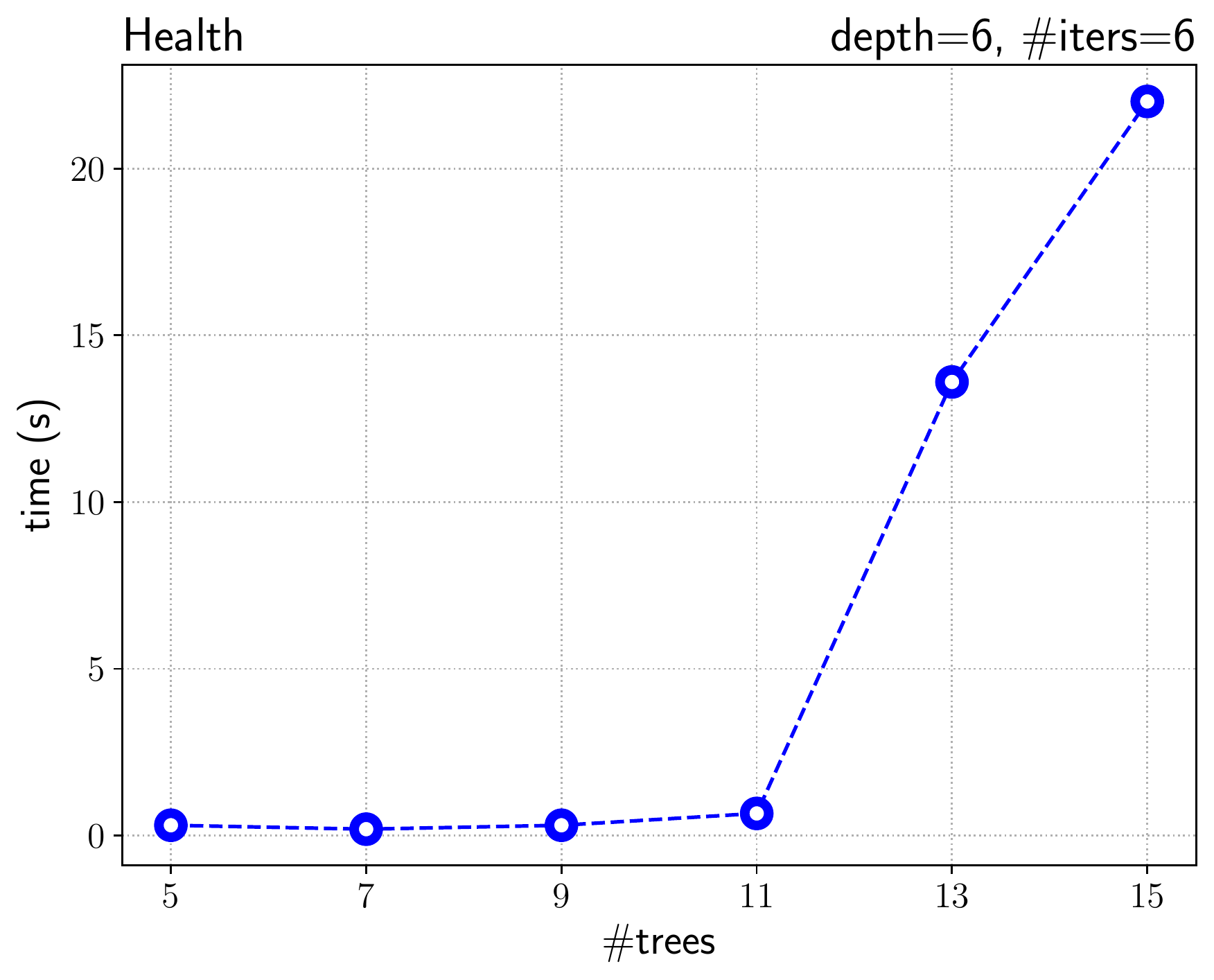}}
    \caption{Running times when varying the number of the decision trees in the ensemble.}
    \label{fig: time-ntrees}
    
    \centering
    \subfloat[]{\includegraphics[width=.33\textwidth]{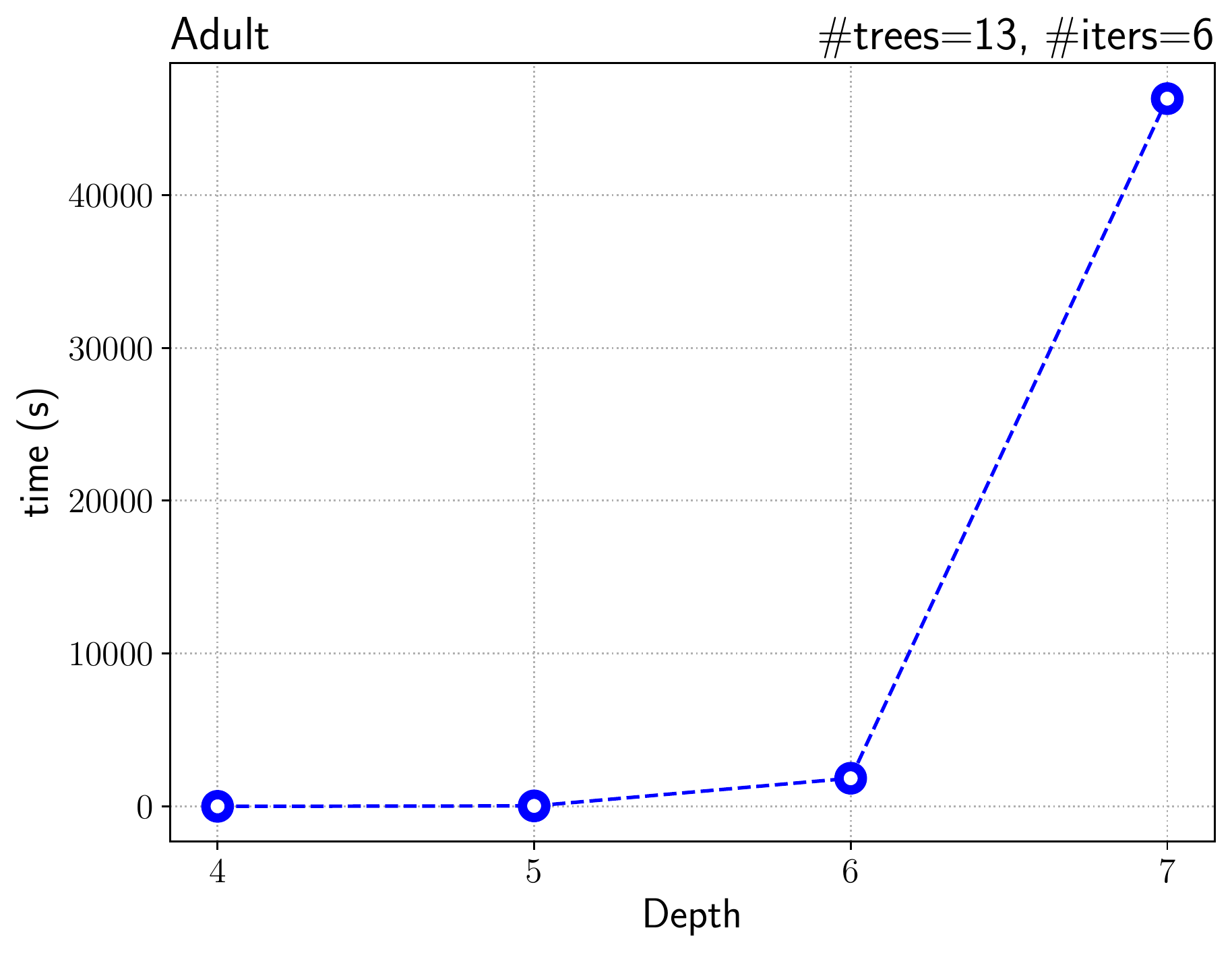}}
  \hfill
    \subfloat[]{\includegraphics[width=.33\textwidth]{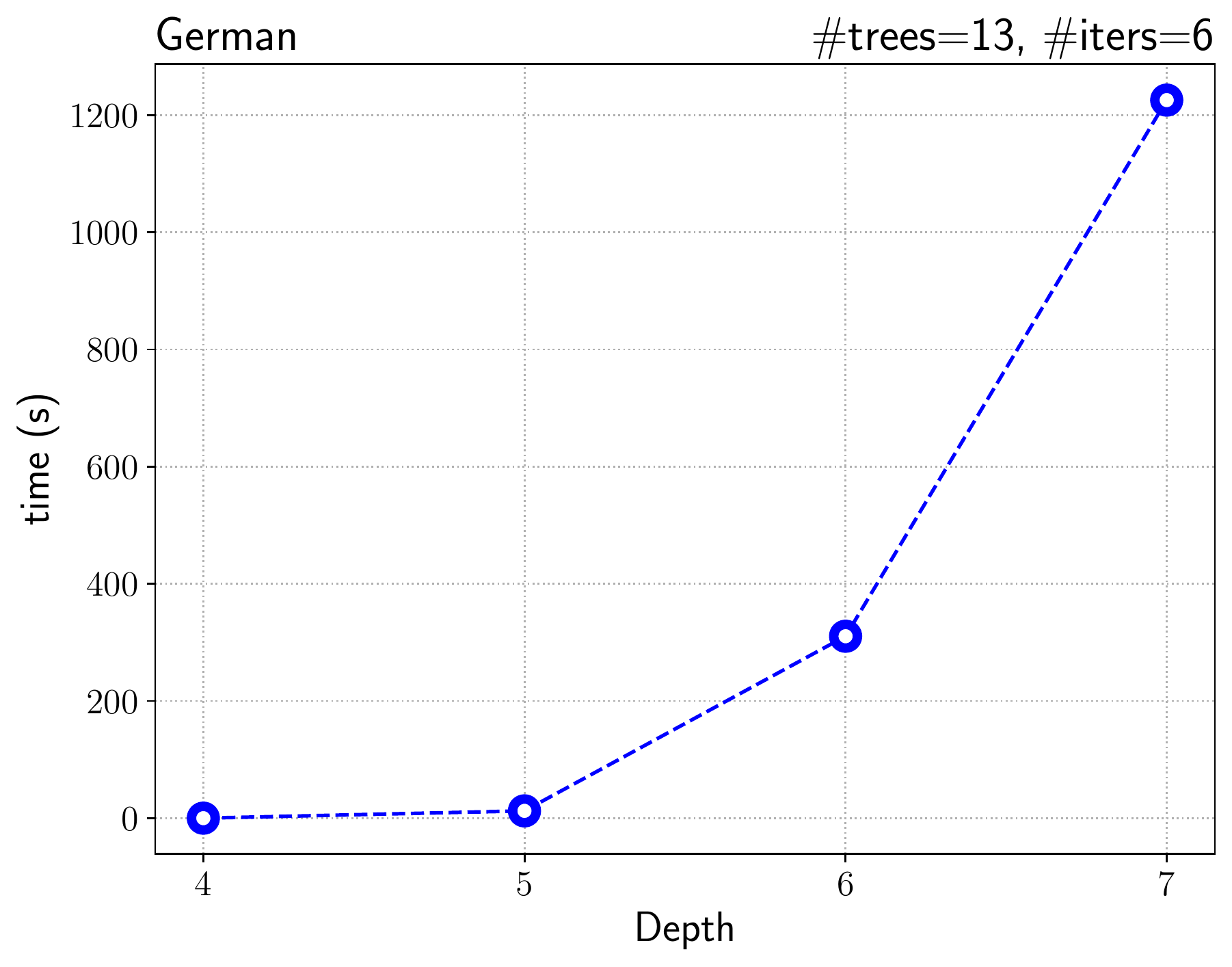}}
  \hfill
    \subfloat[]{\includegraphics[width=.33\textwidth]{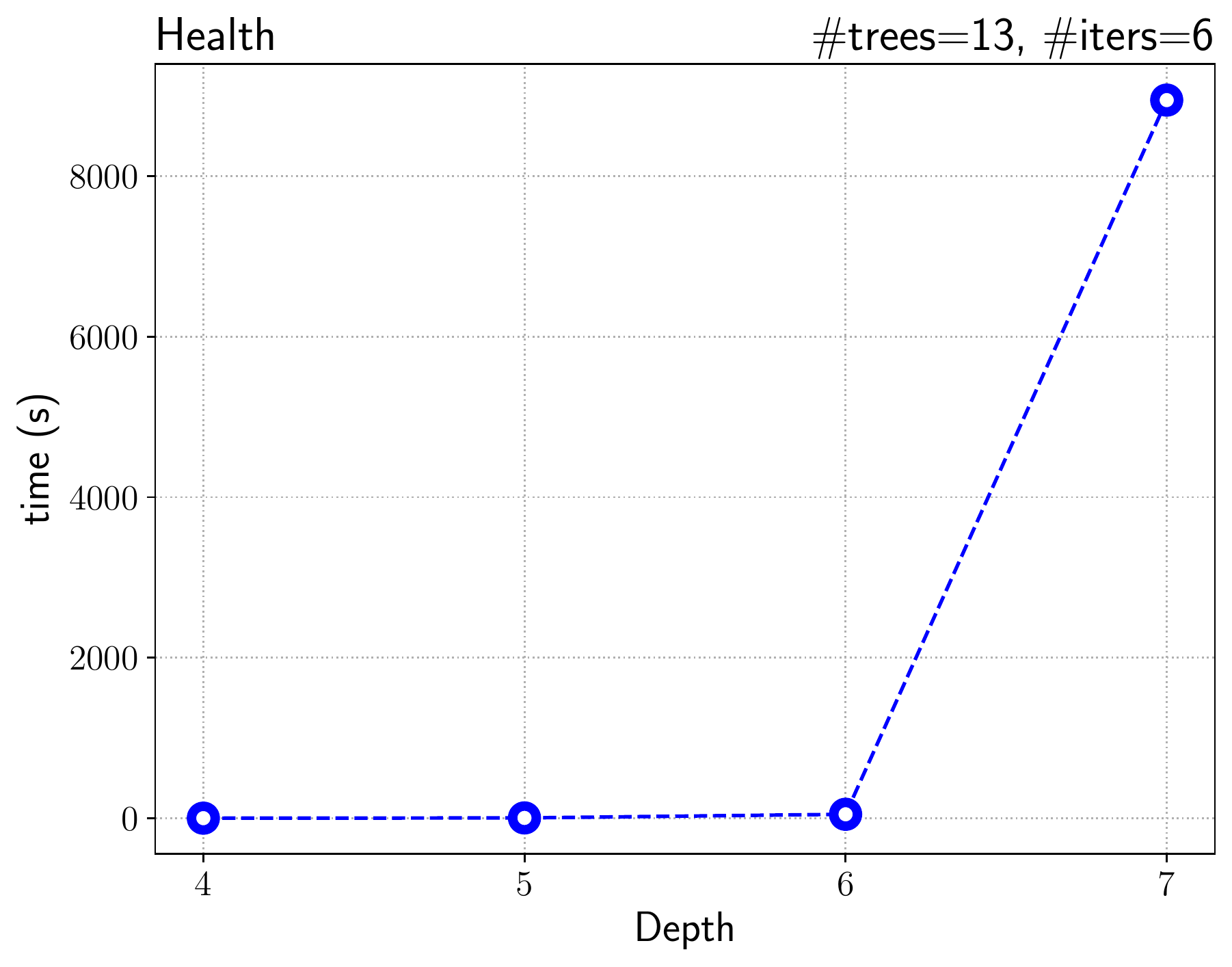}}
    \caption{Running times when varying the maximum depth of the decision trees in the ensemble.}
    \label{fig: time-depth}
\end{figure*}

\subsection{Summary}
Our previous experiments showed that a small number of iterations of the synthesis algorithm are sufficient to provide a precise and human-understandable characterization of the fairness guarantees provided by the analyzed classifiers. Specifically, for all datasets and models we were consistently able to obtain practically useful analysis results when terminating the analysis after six iterations. 

The results were \emph{precise}, because they matched the fairness guarantees provided by the classifier on both the test set and a set of synthetic instances randomly created to provide a more extensive picture of the entire feature space. The results were \emph{explainable}, because a small number of conditions of limited complexity turned out to be sufficient to largely characterize the fairness guarantees provided by the classifier, in particular over the test set. Finally, our algorithm is \emph{efficient} enough for practical adoption on the analyzed models, because all the experiments terminated in a matter of minutes when limiting the number of analysis iterations. Yet, we notice that our algorithm shows an exponential complexity with respect to the number of items, like the Apriori algorithm it is inspired from~\cite{AgrawalS94}. In our application setting, the number of items is correlated with the number of features and thresholds occurring in the model, hence we observe that \writtenbyLC{large models in terms of depth or number of trees} may eventually pose challenges to scalability. Nevertheless, we showed that limiting the number of analysis iterations is useful to collect meaningful results even for cases that may be too complicated to analyze up to convergence, which might be enough to mitigate scalability problems. We leave a more extensive evaluation on larger models and an additional optimization of our implementation to deal with them to future work.

\section{Related Work}
\label{sec:related}
We categorize related work in two broad research areas: fairness testing and fairness verification. We also briefly discuss prior work on fairness and explainability, which is a very recent research area that is getting traction.

\subsection{Fairness Testing}
Fairness testing estimates the fairness guarantees of a classifier by means of the automated generation of a number of test instances, which are fed to the model to identify potential room for discrimination. The first work on fairness testing we are aware of led to the development of the Themis tool~\cite{GalhotraBM17,AngellJBM18}. Themis uses a testing approach to identify instances which may suffer from causal discrimination, which is the same notion of fairness used in this paper. Its approach is based on a random generation of test instances, whose number is determined by a simple statistical test. Later work in the area like Aequitas improved the test instance generation approach by integrating the global random search of Themis with a perturbation-based local search~\cite{UdeshiAC18}. Even more recent work combined symbolic execution with local explanation techniques to further improve the effectiveness of fairness testing~\cite{AggarwalLNDS19}, proposed more advanced statistical tests~\cite{TaskesenBKN21} and improved the explainability of the testing results~\cite{BlackYF20}.

Contrary to our proposal, fairness testing performs an under-approximated analysis, i.e., it can find counter-examples to fairness, but it cannot prove fairness for specific classes of individuals. This is similar to what happens in software verification, where testing can be used to prove the presence of bugs, but not their absence. Testing thus plays an orthogonal role with respect to verification: when fairness guarantees cannot be proved for specific classes of individuals, one can rely on testing to identify counter-examples. Indeed, it would be interesting to study how our fairness verification approach could boost the effectiveness of fairness testing, in particular by directing it towards portions of the feature space where discrimination might potentially happen. We leave the investigation of this research idea to future work.

\subsection{Fairness Verification}
Fairness verification of ML models attracted a lot of attention by the community, as shown by the emergence of several recent surveys on the topic~\cite{abs-2010-04053,abs-2012-15816,MehrabiMSLG21}. Fairness is indeed a broad research area and several definitions of fairness have been proposed in the literature, each with its pros and cons: see~\cite{abs-1808-00023,VermaR18,MakhloufZP21} for a critical comparison of different fairness definitions. For the purposes of this paper, it is just worth noticing that existing fairness definitions can be broadly categorized in two classes: \emph{individual fairness} definitions, dictating that similar inputs must result in similar outputs, and \emph{group fairness} definitions, stipulating that a particular subset of inputs taken as a whole must be treated like a different subset. Of course, when pursuing formal verification, one has to stick to a specific notion of fairness, which is lack of causal discrimination in our case. This property belongs to the class of individual fairness and we choose it for different reasons: it is intuitive, popular, and provides global fairness guarantees beyond the test data. Since individual fairness and group fairness deal with different concerns and rely on rather different technical tools, we consider verification of group fairness as complementary to our work~\cite{AlbarghouthiDDN17,Bastani0S19,SunSDZ21,GhoshBM21}.

A key difference of our work with respect to the state of the art is its focus on tree-based models, which have been largely ignored by prior work. Indeed, the first work on the verification of individual fairness focused on linear models such as logistic regression and support vector machines~\cite{JohnVS20}. Most later work, instead, focused on neural networks and cannot be applied to decision tree ensembles~\cite{abs-2205-09927,UrbanCWZ20,RuossBFV20}. A general approach to fairness verification based on SMT solving was presented in~\cite{IgnatievCSHM20}. The approach can be applied to tree-based models by encoding them into SMT, however, it treats fairness just as a binary, unconditional property, i.e., a model is only considered to be fair when sensitive features never play any role in classification. This is too restrictive in practice, as shown by their experimental evaluation which only verified two out of 40 analyzed models as fair. Our verification approach is more expressive, because it automatically identifies sufficient conditions for fairness, defining specific classes of individuals where discrimination cannot happen; these classes may be small or large depending on the actual fairness guarantees of the model. Indeed, observe that the trivially true condition subsumes the simple verification problem considered in~\cite{IgnatievCSHM20}.

The only fairness verification approach designed for tree-based models we are aware of is presented in~\cite{RanzatoUZ21}. Their approach allows the verification of a local fairness property, which just predicates on a specific set of test instances, thus providing weaker fairness guarantees than the global fairness proofs offered by lack of causal discrimination. Also, their proposed approach just computes the causal discrimination score of the classifier over the test set, without providing any explanation of which individuals cannot suffer from discrimination. On the other hand, their approach was integrated into a fair training algorithm called FATT, which is an interesting avenue for future work we would like to pursue.

\subsection{Fairness and Explainability}
Reconciling fairness and explainability has been recognized as an important problem for specific application scenarios, such as algorithmic hiring~\cite{SchumannFMD20}. However, the area is still at its infancy~\cite{ZhouCH20} and there has been limited work on the topic so far.
\writtenbyLC{Recent work focused on proposing methods to generate feature-level bias explanations~\cite{abs-2010-07389, GhoshSW22} and counterfactual explanations~\cite{SharmaHG20}, investigate tools for evaluating ML models with respect to fairness~\cite{abs-2106-07483} and training ML models that are both explainable and fair~\cite{GrabowiczPM22, QiangLBZ22, abs-1910-02043}.}
Moreover, recent research also investigated explainable fairness for recommender systems~\cite{TanXG00Z21,GeTZXL0FGLZ22}.

The use of logical formulas for explainability purposes has also been investigated by the community. Prior work proposed approaches to use logical formulas as building blocks of logic-related models, i.e., rule lists~\cite{AngelinoLASR17} and decision sets~\cite{LakkarajuBL16}, that exhibit high explainability and accuracy. Moreover, since tree-based ensembles are difficult to explain because of their large number of trees, approaches for extracting rules providing an explainable and faithful description of a tree ensemble have been proposed~\cite{abs-2206-14359, Deng19, Friedman_2008, MashayekhiG17, HatwellGA20, BenardBVS21}. However, the idea of using logic formulas to explain the fairness guarantees provided by a ML model is novel to the best of our knowledge. In other words, a key difference of our work with respect to the state of the art is the target of the explanations, because our proposal aims to explain the fairness guarantees of a tree ensemble, not the outcome of its predictions.

\section{Conclusion}
\label{sec:conclusion}
In this paper, we presented a new global fairness verification approach for tree-based classifiers. Our approach synthesizes sufficient conditions for fairness, expressed as a set of traditional propositional logic formulas, which are readily understandable by human experts. The analysis is proved to be sound and complete. Extensive experimental results on public datasets show that the analysis is precise, easily understandable by human experts and efficient enough for practical adoption.

We foresee a few relevant directions for future work. First, we would like to leverage our verification approach as a powerful foundation to train tree ensembles satisfying global fairness properties. This seems feasible, because prior work showed how a local fairness verifier can be used to train locally fair models~\cite{RanzatoUZ21}. Moreover, we plan to integrate fairness verification and fairness testing by using the conditions generated by our analysis and SMT solving to effectively find counterexamples suffering from causal discrimination. Indeed, the conditions returned by our synthesis algorithm identify portions of the feature space that cannot include such counterexamples, so fairness testing can be made more effective by sampling only from different areas. Finally, we would like to explore the generalization of our approach to capture group fairness properties of tree-based models~\cite{SunSDZ21}.

\bibliographystyle{IEEEtran}
\bibliography{main}

\appendices

\section{Case Study: German Dataset}
\label{sec:appendix}

\begin{table}[t]
    \centering\caption{Top 5 logic formulas obtained for the German dataset (model with 13 trees of maximum depth six). The formulas are ordered by decreasing importance.}
    \begin{tabular}{c|l}
    Rank & \multicolumn{1}{c}{Formula} \\
    \hline
    1 & \begin{tabular}[l]{@{}l@{}} 
    status $=$ ``no checking account" \\
    $\wedge$ savings $\neq$ ``unknown/no savings account" \\
    $\wedge$ installment\_plans $=$ ``None''
    \end{tabular} \\
    \hline
    2 & \begin{tabular}[l]{@{}l@{}} 
    250.00 $\leq$ credit\_amount $\leq$ 7,464.50 \\
    $\wedge$ status $=$ ``no checking account" \\
    $\wedge$ (credit\_history $=$ ``no credits/all credits paid back duly" \\
    \quad $\vee$ credit\_history $=$ ``existing credits paid back duly till now" \\
    \quad $\vee$ credit\_history $=$ ``delay in paying off in the past")
    \end{tabular} \\
    \hline
    3 & \begin{tabular}[l]{@{}l@{}}
    status $=$ ``no checking account" \\
    $\wedge$ credit\_history $\neq$ ``critical account/other credits existing" \\
    $\wedge$ savings $\neq$ ``unknown/no savings account"
    \end{tabular} \\
    \hline
    4 & \begin{tabular}[l]{@{}l@{}}250.00 $\leq$ credit\_amount $\leq$ 7,464.50 \\
    $\wedge$ status $=$ ``no checking account" \\
    $\wedge$ credit\_history $=$ ``critical account/other credits existing" \\
    $\wedge$ installment\_plans $=$ ``None''
    \end{tabular} \\
    \hline
    5 & \begin{tabular}[l]{@{}l@{}}
    telephone $=$ False \\
    $\wedge$ status $=$ ``no checking account" \\
    $\wedge$ (credit\_history $=$ ``all credits at this bank paid back duly'' \\
    \quad $\vee$ credit\_history $=$ ``existing credits paid back duly till now'' \\
    \quad $\vee$ credit\_history $=$ ``delay in paying off in the past'')
    \end{tabular} \\
    \hline
    \end{tabular}
    \label{tab:top-german}
\end{table}

In this section, we present some examples of logic formulas generated by our synthesis algorithm to show that they provide effective explanations about the fairness of the model. 

We consider logic formulas synthesized for a tree ensemble with 13 trees of maximum depth six, trained on the German dataset. This dataset consists of 1,000 people who would like to get credit from a bank and the corresponding classification task consists in classifying their requests as high or low risk. We consider \textit{sex} as the sensitive feature, so the logic formulas provide us the description of subsets of people whose credit risk prediction does not change by flipping their sex. We present the top 5 most important formulas returned by the synthesis algorithm in Table~\ref{tab:top-german}, in decreasing order of importance. We immediately observe that the formulas are explainable, since they predicate on at most four features. We then comment three representative formulas below.

The first-ranked logic formula is interesting to examine, since it covers a subset of people that we expect to be highly represented also in the test set, because the formulas are ranked by the importance computed on the training set (see Section~\ref{sec:experiments}). In particular, it indicates that an individual that has no checking account, possesses a saving account whose amount is registered with the bank and has no installment plans will not receive a different credit risk depending on their sex. The formula suggests that the person's savings represent enough information for assigning the credit risk irrespective of the sex of the requester, at least when no information about the individual's checking account or other installment plans is available.

The second formula is interesting too. In particular, the formula indicates that the model does not discriminate by sex individuals who request a small credit amount (between 250 and $\sim$ 7,500 DM), do not have a registered checking account and do not present a critical credit history. The formula highlights a reasonable behavior of the ML classifier: when requesting small credit amounts, the presence of a good credit history already suffices for taking a decision, without relying also on the individual's sex.

Finally, the fourth formula is the only one in the top 5 formulas that predicates on individuals with a bad credit history. In particular, it explains that an individual who requests a small credit amount, has no checking account and other installment plans, but presents a critical credit history, is not discriminated by the classifier based on his sex. This is interesting because bad credit history should be the most important information when assessing a loan request. The reason why this only emerges for small loan requests might be twofold. First, assessing requests for small loans is expected to be easier, hence the corresponding proof of fairness is also easier and is established with a small number of analysis iterations. Moreover, the dataset includes many such requests, hence the importance of the formula increases and leads to its inclusion in the top formulas.

In conclusion, the synthesized logic formulas are useful to the analyst to conclude whether fairness guarantees can be provided for particular subgroups of instances in the domain of interest, not just for specific instances in the test set.

\end{document}